\documentclass{IEEEtaes}
\usepackage{amsmath,amsfonts}
\usepackage{algorithmic}
\usepackage{algorithm}
\usepackage{array}
\usepackage[caption=false,font=normalsize,labelfont=sf,textfont=sf]{subfig}
\usepackage{textcomp}
\usepackage{stfloats}
\usepackage{url}
\usepackage{verbatim}
\usepackage{graphicx}
\usepackage{cite}
\usepackage{multirow}
\usepackage[caption=false,font=normalsize,labelfont=sf,textfont=sf]{subfig}
\usepackage{color}
\usepackage{xcolor}
\usepackage{amsthm,amssymb,bm}

\usepackage{colortbl} 

\usepackage{mathtools}
\usepackage{array}       
\usepackage{booktabs}    
\usepackage{caption}
\usepackage{tabularx}

\newtheorem{lemma}{Lemma}

\newtheorem{assumption}{Assumption}
\newtheorem{theorem}{Theorem} 

\jvol{XX}
\jnum{XX}
\jmonth{XXXXX}
\paper{1234567}
\pubyear{2022}
\doiinfo{TAES.2022.Doi Number}

\begin{document}

\title{Cooperative Circumnavigation for Multiple Unmanned Surface Vehicles Without External Localization}

\author{Xueming Liu$^{\dagger}$}
\author{Lin Li$^{\dagger}$}
\author{Xiang Zhou}
\author{Tianjiang Hu}
\author{Qingrui Zhang}
\affil{School of Aeronautics and Astronautics, Sun Yat-sen University (Shenzhen Campus), Shenzhen, China}

\markboth{AUTHOR ET AL.}{SHORT ARTICLE TITLE}

\maketitle

\begin{abstract}
This paper proposes a cooperative target circumnavigation framework for multiple unmanned surface vehicles (USVs) operating without external localization. The objective is to maintain a uniform circular formation of a specified radius around a target using only limited onboard sensing. The framework adopts a heterogeneous perception strategy that distinguishes between the asymmetric sensing relationships with the target and among the USVs. Specifically, the USVs obtain relative range and displacement measurements through active perception and inter-vehicle communication, while bearing measurements to a non-cooperative target are acquired via passive sensors. To estimate relative positions — both among USVs and between each USV and the target — we employ a Maximum Correntropy Kalman Filter and a Pseudo-Linear Kalman Filter, respectively. A coupled-oscillator-based formation controller is designed to ensure system observability while achieving circumnavigation. Theoretical analysis demonstrates that the controller ensures the relative motions between the USVs, as well as that between each USV and the target, satisfy the persistent excitation condition, thereby guaranteeing observability of the Kalman-based filters. The effectiveness of the proposed approach is validated through numerical simulations.
\end{abstract}

\begin{IEEEkeywords}
Multi-USV System, Circumnavigation, Kalman Filter, Observability, Formation Control.
\end{IEEEkeywords}

\renewcommand{\arraystretch}{ 1.1}
\begin{table}[htbp]
\centering
\caption*{KEY NOMENCLATURE}
\begin{tabular}{@{}l p{5.75cm}@{}}
$(\cdot)_{i}, (\cdot)_{j}, (\cdot)_{o}$ & State of USV $i$, USV $j$, or target $o$. \\
$(\cdot)_{ij}, (\cdot)_{io}$ & Relative states or parameters for inter-USV or USV-Target pairs. \\
$\boldsymbol{p},\boldsymbol{v}\in \mathbb{R}^2,\boldsymbol{x}\in \mathbb{R}^4$ & Position, velocity, and their combined state vector. \\
$\theta_{i}, v_{i}, \omega_{i}\in \mathbb{R}$ & Yaw angle, surge speed, and yaw rate of USV $i$. \\
$\boldsymbol{\delta},\boldsymbol{b}\in \mathbb{R}^2$, ${r}\in \mathbb{R}$ & Displacement, bearing vector, and range. \\
$\boldsymbol{z}_{io}\in \mathbb{R}^2, z_{ij}\in \mathbb{R}$ & Equivalent measurements for USV-Target and inter-USV cases, respectively.\\
$\hat{(\cdot)}^{-}, \hat{(\cdot)}^{+}$ & Prior and posterior estimated states. \\
$\boldsymbol P_{(\cdot)}^{-}, \boldsymbol P_{(\cdot)}^{+}$ & Prior and posterior error covariance matrices, with $(\cdot)\in\{ij,io\}$. \\
$\tilde{(\cdot)}, (\cdot)^{*}$ & Measured and desired value of $(\cdot)$, respectively.\\
$\varphi_{i}$ & Desired phase angle of USV $i$.\\
$U_{(\cdot)}\in \mathbb{R}$ & Upper bound of $(\cdot)$.\\
$\boldsymbol{\mu}_{(\cdot)},\boldsymbol{R}_{(\cdot)}$ & Measurement noise and its covariance matrix for measurement $(\cdot)$. The dimension depends on that of $(\cdot)$.\\
\end{tabular}

\end{table}

\section{\textsc{Introduction}}
\IEEEPARstart{T}{arget} tracking is a core capability that enables unmanned surface vehicles (USVs) to conduct missions such as marine environmental monitoring, surface rescue operations, and maritime law enforcement \cite{qiao2023survey, liu2016unmanned,zhangModelReferenceReinforcementLearning2022}. Compared to a single USV, coordinated multi-USV systems demonstrate greater efficiency and robustness in these tracking missions \cite{tong2022cooperative,gaoCoordinatedTargetTracking2021,yanDistributedControlUnmanned2024}. A prevalent coordination strategy for such systems is circumnavigation tracking, where multiple USVs maintain a uniform formation on a circle of a specified radius around the target \cite{yanDistributedControlUnmanned2024,zhengInternalModelObserverbased2024,sunCooperativeStrategyPursuitevasion2022}. Much of the existing studies \cite{che2020cooperative, zhong2019circumnavigation, mwaffo2024formation, chen2025cooperative, liu2025robust, wangReinforcementLearningbasedMovingtarget2024a,shaoFiniteTimeLearningBasedOptimal2025,meiEnhancedFixedTimeCollisionFree2024} focus on controller design for USVs, which typically rely on the assumption that the positions of all USVs and the target are perfectly known. This assumption, however, is challenged by the lack of external positioning infrastructure (\emph{e.g.} in GPS-denied environments), which leads to unreliable localization \cite{zhengInternalModelObserverbased2024,liangDistributedCoordinatedTracking2020, yanDistributedControlUnmanned2024,huBearingOnlyMotionalTargetSurrounding2022}.

Under these constraints, effective circumnavigation requires each USV to estimate both the target state and its relative positions to neighboring vehicles. To tackle this challenge, several solutions have been developed based on limited onboard measurements typically range measurements \cite{hung2020range,yanDistributedControlUnmanned2024,shaoDistanceBasedEllipticalCircumnavigation2023a,hungCooperativeDistributedEstimation2022} or bearing measurements \cite{shames2012circumnavigation,ju2022enclosing,huBearingOnlyMotionalTargetSurrounding2022}. A key limitation of these studies is the ignorance of the intrinsic asymmetry between cooperation and noncooperation in the system, as they assume homogeneous sensing modalities for both targets and neighbors. In reality, however, USVs often cooperate during target tracking, while the target itself is typically noncooperative. This asymmetry leads to distinct sensor configuration challenges across different perception strategies. In particular, bearing measurements can be obtained through passive sensing modalities, such as monocular cameras\cite{suiAdaptiveBearingOnlyTarget2024, huBearingOnlyMotionalTargetSurrounding2022,liRecursiveTotalLeast2025a}. This passivity makes them well-suited for monitoring non-cooperative targets. However, they are ill-suited for inter-USV measurements for two main reasons. First, due to the limited field of view (FOV), monocular cameras can only cover a restricted area, meaning not all neighboring surroundings can be observed. While omnidirectional cameras could mitigate FOV limits, they entail additional burdens, including higher hardware costs and increased calibration complexity, which can be often prohibitive for practical deployment.
Second, bearing measurements become unreliable under non-line-of-sight (NLOS) conditions, thus causing measurement failures like mutual occlusion among USVs. In contrast, range measurements have received significant attention in multi-unmanned systems, especially with the increasing maturity of Ultra-Wideband (UWB) sensors \cite{nguyen2020,liuCooperativeCircumnavigationMultiQuadrotor2025,guoUltraWidebandOdometryBasedCooperative2020,zhaoVehicleCooperativePositioning2025}. Moreover, the omnidirectionality of UWB technology enables it to avoid the limited FOV problem characteristic of vision-based approaches. 

In this paper, we propose a heterogeneous sensing strategy for multi-USV target circumnavigation. Each USV acquires bearing measurements of the target via onboard sensors. A Pseudo-Linear Kalman Filter (PLKF) \cite{liThreeDimensionalBearingOnlyTarget2022} is introduced to estimate the target states. Alternatively, relative positioning between USVs is determined through range measurements. A common approach for estimating high-dimensional relative positions from one-dimensional ranges is to fuse them with inertial odometry information \cite{nguyen2020,nguyen2019,guoUltraWidebandOdometryBasedCooperative2020,zhaoVehicleCooperativePositioning2025}. This fusion enables the construction of a pseudo-linear measurement model by leveraging the spatial geometry between successive odometry displacements and range measurements \cite{nguyen2020,nguyen2019,guoUltraWidebandOdometryBasedCooperative2020}. However, existing studies either disregard the measurement noises \cite{nguyen2020,nguyen2019} or directly assume that UWB measurement noises follow a Gaussian distribution \cite{guoUltraWidebandOdometryBasedCooperative2020}. Recent findings indicate that UWB measurement noises are inherently non-Gaussian and exhibit a heavy-tailed distribution \cite{fishbergMURPMultiAgentUltraWideband2024}. Such non-Gaussian features of UWB measurement noises can lead to significant estimation errors in practice, if not properly accounted for. To address these limitations, we adopt the Maximum Correntropy Kalman Filter (MCKF) \cite{chenMaximumCorrentropyKalman2017}, which mitigates the impact of such noises through the Maximum Correntropy Criterion (MCC), thereby enhancing the reliability of relative localization between USVs. The key difference between the MCKF and the standard Kalman filter (KF) or extended Kalman filter (EKF) lies in their optimization criteria. Conventional filters minimize the Mean Square Error (MSE), an approach that is optimal only under Gaussian noise assumptions. In contrast, MCKF employs the MCC as a robust similarity measure, which effectively downweights large outliers and is thus particularly suited to handling heavy-tailed non-Gaussian noise.

Furthermore, both bearing and range measurements provide only partial positional information. Consequently, Kalman filter-based estimators must satisfy the observability condition to ensure convergent state estimation. This requirement can be met by guaranteeing persistent excitation (PE) of the relative motions among the USVs as well as between each USV and the target \cite{nguyen2020,liThreeDimensionalBearingOnlyTarget2022}. Simultaneously, the multi-USV system must achieve target circumnavigation by forming an evenly spaced circular formation around the target. This leads to a complex dual control problem \cite{deghatLocalizationCircumnavigationSlowly2014,suiAdaptiveBearingOnlyTarget2024}. To address this problem, this paper proposes a two-stage control framework. In the first stage, a coupled oscillator model \cite{liuCooperativeCircumnavigationMultiQuadrotor2025, liuFormationControlEnclosing2025a, liuFormationControlMoving2023} is employed to generate time-varying desired relative positions. This dynamic guidance strategy actively induces the relative motions required to satisfy the PE condition. In the second stage, a classical formation controller is applied by using these desired positions along with the estimated relative states to achieve circumnavigation. Furthermore, the proposed framework is also robust to temporary target occlusion. In such cases, the USVs maintain the enclosing formation by preserving their relative positions with neighboring vehicles.

In summary, unlike previous works \cite{hung2020range, yanDistributedControlUnmanned2024, shaoDistanceBasedEllipticalCircumnavigation2023a, hungCooperativeDistributedEstimation2022, shames2012circumnavigation, ju2022enclosing, huBearingOnlyMotionalTargetSurrounding2022} that assume homogeneous sensing for target circumnavigation, a heterogeneous sensing strategy is proposed to address sensor adaptation issues in practical applications. Regarding relative positioning among USVs, prior range-based studies \cite{nguyen2020, nguyen2019, guoUltraWidebandOdometryBasedCooperative2020} assume PE for estimator convergence without guaranteeing it; in contrast, a cooperative control strategy is actively designed to ensure the PE condition in our work. For bearing-only target localization and tracking, unlike \cite{liThreeDimensionalBearingOnlyTarget2022} which considers a single agent, a multi-USV system is analyzed with a rigorous observability proof. 
Most importantly, the above two objectives are not achieved separately but within a unified framework, simultaneously ensuring both estimator convergence under heterogeneous sensing and the goal of cooperative circumnavigation.
Thus, the main contributions of this work are as follows.
\begin{enumerate}
    \item A general framework is designed for a multi-USV system to perform target circumnavigation under limited onboard sensing, operating without global positioning. It explicitly addresses the inherent sensing asymmetry by estimating the non-cooperative target with passive bearing measurements while localizing neighboring USVs via active ranging, thereby enhancing adaptability to practical constraints.
    
    \item A pseudo-linear Maximum Correntropy Kalman Filter is proposed for range-odometry-based relative localization between USVs. By systematically handling non-Gaussian noise from range sensors, this method overcomes a key limitation in prior work \cite{nguyen2020,nguyen2019,guoUltraWidebandOdometryBasedCooperative2020} and significantly enhances estimation robustness and localization reliability under practical sensing constraints.
    
    \item A dual-purpose cooperative controller is designed that not only organizes the USVs into a stable circular formation but also generates the PE required for estimator observability under partial observations. The theoretical validity of this approach is rigorously established. The strategy also maintains the formation during temporary target occlusion, enhancing robustness in complex scenarios.

\end{enumerate}

The remainder of this paper is structured as follows. Section \ref{section_problem_form} introduces the motion and measurement models of the USVs. Section \ref{section_state_estimate} then details the estimation of relative positions between USVs and the target state. Based on this, Section \ref{section_control} presents the controller design, while Section \ref{section_Analysis} analyzes the observability of the proposed estimators. Finally, Section \ref{section_sim} provides simulation results for verification.

\section{\textsc{PROBLEM FORMULATION}} \label{section_problem_form}
Consider a team of $n$ USVs, denoted by $\mathcal{V}_i$ for $i \in \{1, \ldots, n\}$, which cooperatively circumnavigate a moving target on a horizontal plane, as illustrated in Fig.~\ref{fig:usv_scenario2}.  The sensing and communication assumptions for this task are as follows.

\begin{enumerate} 
    \item Global positioning is unavailable to each USV. However, each vehicle is equipped with an inertial measurement unit (IMU) that provides ego-motion information, enabling periodic odometry displacement measurements.
    \item Each USV can measure relative ranges to all other USVs (\emph{e.g.}, via UWB sensors).
    \item Each USV can obtain bearing measurements of the target (\emph{e.g.}, using visual sensors). During the mission, some USVs may temporarily lose target measurements due to occlusion; however, at any given time, at least one USV in the team maintains a valid measurement of the target.
    \item For each vehicle $\mathcal{V}_i$, the neighbor set is defined as $\mathcal{N}_i={\{1, \ldots, n\}}\backslash\{i\}$, indicating that every USV is mutually interconnected with all others and can exchange essential information accordingly.
\end{enumerate}

Based on the above sensing and communication assumptions, the two main objectives of this work are as follows:

\begin{enumerate}
    \item Estimation of the relative positions between the USVs and the target only using onboard measurements.
    \item Design of a cooperative controller to guide all vehicles to circumnavigate the target at a predefined radius with a uniform spatial distribution and ensure the observability of the relative position estimation.
\end{enumerate}

\begin{figure}[htbp]
  \centering
  \includegraphics[width=0.48\textwidth]{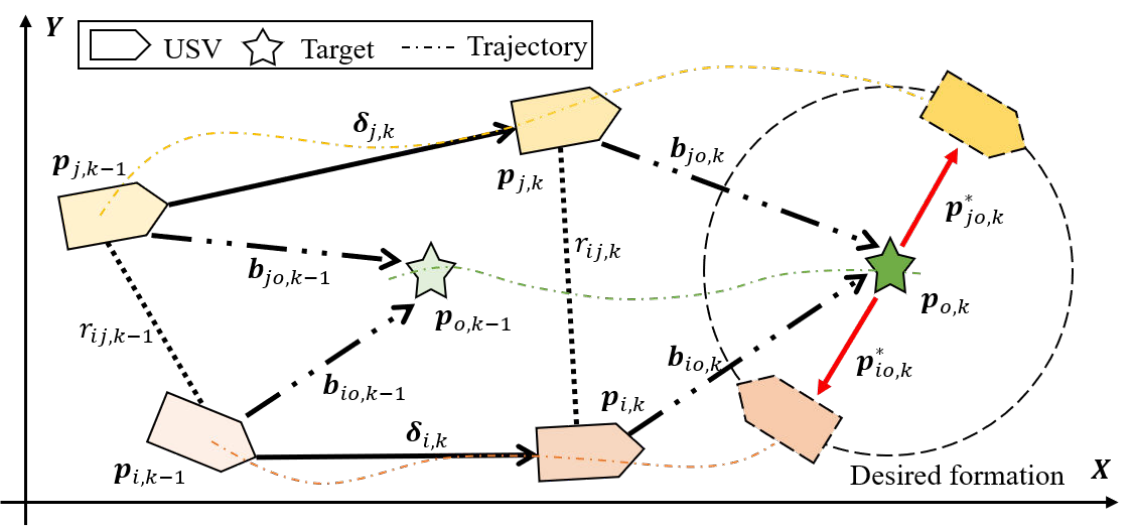}
  \caption{Illustration of the multi-USV cooperative circumnavigation task. At time step $k$, each USV $i$ obtains its own displacement $\boldsymbol{\delta}_{i,k}$, the relative range $r_{ij,k}$ to neighboring USV $j$, and the bearing vector $\boldsymbol{b}_{io,k}$ to the target.} The desired positions of USV $i$ and USV $j$ are denoted by dashed lines, where $\boldsymbol{p}_{io,k}^*$ and $\boldsymbol{p}_{jo,k}^*$ are their respective desired relative position.
  \label{fig:usv_scenario2}
\end{figure}

\subsection{Motion Model}

In this paper, we focus on the fundamental issues of target circumnavigation, specifically relative localization and cooperative control. Hence, a simplified surge-yaw model \cite{yanDistributedControlUnmanned2024} is employed to represent the motion characteristics of the USV \eqref{eq:dynamic_USV}.  considering disturbances or hydrodynamic uncertainties. The discrete surge-yaw kinematic model for each $\mathcal{V}_i$ is described as follows
    \begin{equation}\label{eq:dynamic_USV}
        \begin{aligned}
          & \boldsymbol{p}_{i,k+1} = \boldsymbol{p}_{i,k} + \begin{bmatrix} \cos \theta_{i,k} \\ \sin \theta_{i,k} \end{bmatrix} v_{i,k} \Delta t  \\ 
          & \theta_{i,k+1} = \theta_{i,k} + \omega_{i,k} \Delta t
        \end{aligned}
    \end{equation}
 where $\boldsymbol{p}_{i,k} \in \mathbb{R}^2$ and $\theta_{i,k} \in \mathbb{R}$ denote the position and yaw angle of $\mathcal{V}_i$ at time $k\Delta t$, respectively, with $\Delta t$ being the sampling period; $v_{i,k} \in \mathbb{R}$ and $\omega_{i,k} \in \mathbb{R}$ are the control inputs, representing the surge speed and yaw rate, respectively. According to \eqref{eq:dynamic_USV}, the velocity of $\mathcal{V}_i$ at time step $k$ is given by 
    \begin{equation}\label{eq:USV_wel}
        \boldsymbol{v}_{i,k} = v_{i,k} \begin{bmatrix} \cos \theta_{i,k} \\ \sin \theta_{i,k} \end{bmatrix}
    \end{equation}
A real USV system generally has limited speed and turning rate. Hence, the control inputs are assumed to be bounded with $\left\vert v_{i,k} \right\vert  \leq U_{\boldsymbol{v}_{i}}$ and $ \left\vert \omega_{i,k} \right\vert  \leq U_{\omega_{i}}$, where $U_{\boldsymbol{v}_{i}}$ and $U_{\omega_{i}}$ are positive constants.

We denote the target state as $\boldsymbol{x}_{o,k} = \begin{bmatrix} \boldsymbol{p}_{o,k}^T  & \boldsymbol{v}_{o,k}^T \end{bmatrix}^T \in \mathbb{R}^4$, where $\boldsymbol{p}_{o,k} \in \mathbb{R}^2$ represents the target position, and $\boldsymbol{v}_{o,k} \in \mathbb{R}^2$ represents the target velocity. For the non-cooperative moving target, this paper employs a noise-driven double-integrator model \cite{liThreeDimensionalBearingOnlyTarget2022,zhengOptimalSpatialTemporal2025}
    \begin{equation}\label{eq:dynamic_target}
        \boldsymbol{x}_{o,k} = \boldsymbol{A} \boldsymbol{x}_{o,k-1} + \boldsymbol{w}_{o,k-1}
    \end{equation}
where $\boldsymbol{w}_{o,k} \sim \mathcal{N}(\boldsymbol{0}, \boldsymbol{Q}_{o})$ are the zero-mean Gaussian process noises with $\boldsymbol{Q}_{o}$ denoting the covariance matrix, and
\begin{equation}
\quad \boldsymbol{A} = \begin{bmatrix} \boldsymbol{I}_{2} & \Delta t \boldsymbol{I}_{2} \\ \boldsymbol{0}_{2} & \boldsymbol{I}_{2} \end{bmatrix}
\end{equation}
The process noise $\boldsymbol{w}_{o,k}$ is introduced to characterize the unknown maneuverability of the target. The magnitude of the target velocity $\boldsymbol{v}_{o,k}$ is assumed to be bounded by a constant $U_{\boldsymbol{v}_{o}} >0$.

\subsection{Onboard Measurement Model}

\textit{1) Odometry-based displacement measurements:} As a core capability, autonomous navigation enables a USV to estimate its ego-motion by fusing data from proprioceptive sensors (\emph{e.g.} IMUs) with exteroceptive sensors (\emph{e.g.} vision, LiDAR). In this context, the self-displacement  $\tilde{\boldsymbol{\delta}}_{i,k}$ of the USV is measured at a fixed time intervals $\Delta t$. The measurement model is defined as
\begin{equation}\label{eq:self_displacement}
\left\{\begin{array}{l}
    \tilde{\boldsymbol{\delta}}_{i,k} = \boldsymbol{\delta}_{i,k} + \boldsymbol{\mu}_{\delta_{i},k} \\
     \boldsymbol{\delta}_{i,k} = \boldsymbol{p}_{i,k} - \boldsymbol{p}_{i,k-1}
\end{array}
\right.
\end{equation}
where $\boldsymbol{\delta}_{i,k}$ denote the true displacement values, and $\boldsymbol{\mu}_{\boldsymbol\delta_{i},k} \sim \mathcal{N}(\boldsymbol{0}, \boldsymbol{R}_{\boldsymbol\delta_{i}})$ represent the zero-mean Gaussian measurement noises with the covariance matrix given by $\boldsymbol{R}_{\boldsymbol\delta_{i}}$.

\textit{2) Inter-USV range measurements:} It is assumed that each USV is capable of measuring the relative distance to their neighboring vehicles. Formally, the range measurement $\tilde{r}_{ij,k}$ between neighbouring vehicles $\mathcal{V}_i$ and $\mathcal{V}_j$, with $i,j\in \{1, \ldots, n\}$, is specified as
\begin{equation}\label{eq:relative_range}
    \left\{\begin{array}{l}
    \tilde{r}_{ij,k}= {r}_{ij,k} + {\mu}_{r_{ij},k} \\
     {r}_{ij,k} = \|\boldsymbol{p}_{i,k}-\boldsymbol{p}_{j,k}\|
    \end{array}
    \right.
\end{equation}
where ${r}_{ij,k}$ denote the true inter-USV range measurements, and ${\mu}_{r_{ij},k}$ represent the measurement noises. In real-world applications, the inter-USV range measurements can be acquried using UWB sensors \cite{liuCooperativeCircumnavigationMultiQuadrotor2025,nguyen2020,fishbergMURPMultiAgentUltraWideband2024}. However, as noted in \cite{liuCooperativeCircumnavigationMultiQuadrotor2025,fishbergMURPMultiAgentUltraWideband2024}, a large body of current work assumes that UWB measurement noise follows a Gaussian distribution, despite the presence of outliers in actual sensor noise. To address this, this paper models the measurement noise using a Gaussian mixture model to account for such outliers. 
\begin{equation}\label{eq:r_noise_model_multi_Gaus}
    {\mu}_{r_{ij},k} \sim (1-\epsilon)\mathcal{N}(0, \boldsymbol{R}_{{\mu}_{r_{ij}}}) + \epsilon\mathcal{N}(0,\boldsymbol{R}_{{\mu}_{r_{ij}}}^{outliers})
\end{equation}
where $\epsilon$ denotes the proportion of outliers, $\boldsymbol{R}_{{\mu}_{r_{ij}}}$ is the nominal covariance and $\boldsymbol{R}_{{\mu}_{r_{ij}}}^{outliers}$ is the covariance associated with the outlier component.
Furthermore, as reported in \cite{fishbergMURPMultiAgentUltraWideband2024}, the analysis of real UWB sensor data reveals that the measurement noise exhibits an asymmetric and heavy-tailed distribution. Therefore, this paper also considers a log-normal distribution model for the measurement noise.
\begin{equation}\label{eq:r_noise_model_logGaus}
      \ln \left({\mu}_{r_{ij},k} + r_s\right) \sim \mathcal{N}(0, \boldsymbol{R}_{{\mu}_{r}}) 
\end{equation}
where $\boldsymbol{R}_{{\mu}_{r}}$ represents the equivalent covariance and $r_s$ represents the shift parameter, which is used to adjust the degree of left skewness.
In Section III-A, the common characteristics of the noise in the final equivalent measurement values under these two different models will be further analyzed.

\textit{3) Bearing measurement of target:} The bearing vector from the target to the USV $\mathcal{V}_i$ is defined as \cite{liThreeDimensionalBearingOnlyTarget2022,zhengOptimalSpatialTemporal2025}
\begin{equation}\label{eq:target_bearing}
        \left\{\begin{array}{l}
    \tilde{\boldsymbol{b}}_{io,k} = \boldsymbol{b}_{io,k} + \boldsymbol{\mu}_{\boldsymbol{b}_{io},k} \\
     \boldsymbol{b}_{io,k} = \frac{\boldsymbol{p}_{i,k}-\boldsymbol{p}_{o,k}}{\left\|\boldsymbol{p}_{i,k}-\boldsymbol{p}_{o,k}\right\|}
    \end{array}
    \right.
\end{equation}
where $\boldsymbol{b}_{io,k}$ denote the true bearing vectors,  and $\boldsymbol{\mu}_{\boldsymbol{b}_{io},k} \sim \mathcal{N}(\boldsymbol{0}, \boldsymbol{R}_{\boldsymbol{b}_{io}})$ are the zero-mean Gaussian measurement noise with the covariance matrix defined by $\boldsymbol{R}_{\boldsymbol{b}_{io}}$. In practice, bearing information can be acquired using vision-based sensors, such as a servo camera \cite{liThreeDimensionalBearingOnlyTarget2022,zhengOptimalSpatialTemporal2025}. Consequently, the USV is assumed to be capable of consistently obtaining these measurements from the target, except in cases where the line of sight is obstructed by obstacles.

\section{\textsc{STATE ESTIMATION}} \label{section_state_estimate}
In this section, we introduce the filtering algorithms to achieve the relative localization among USVs and estimate the target state, respectively. Firstly, a MCKF \cite{chenMaximumCorrentropyKalman2017} is developed for the relative localization among USVs using range and displacement measurements. Secondly, a PLKF \cite{liThreeDimensionalBearingOnlyTarget2022} is designed to estimate the target state using bearing-only measurements.

\subsection{Relative Localization Inter-USVs}

Denote the relative position between $\mathcal{V}_i$ and $\mathcal{V}_j$ as
    \begin{equation} \label{eq:relative_pos}
        \boldsymbol{p}_{ij,k} = \boldsymbol{p}_{i,k} - \boldsymbol{p}_{j,k}
    \end{equation}
As established in \cite{liuFormationControlEnclosing2025a}, spatial geometric constraints derived from consecutive relative distance and displacement measurements lead to the following model via the cosine law
    \begin{equation} \label{eq:without_noise_meas}
        \frac{1}{2} \left( r_{ij,k}^{\,2} - r_{ij,k-1}^{\,2} + \| {\boldsymbol{\delta}}_{ij,k} \|^2 \right)
        = \boldsymbol{\delta}_{ij,k}^T \boldsymbol{p}_{ij,k}
    \end{equation}
with $\boldsymbol{\delta}_{ij,k}= \boldsymbol{\delta}_{i,k} - \boldsymbol{\delta}_{j,k}$ denoting the relative displacement between $\mathcal{V}_i$ and $\mathcal{V}_j$. For $\mathcal{V}_i$, it can receive the displacement $\boldsymbol{\delta}_{j,k}$ from $\mathcal{V}_j$ via inter-vehicle communication. Since the ideal value in \eqref{eq:without_noise_meas} is corrupted by noise in practice, we formulate the measurement model as 
    \begin{equation} \label{eq:rel_meas}
         z_{ij,k} = \tilde{\boldsymbol{\delta}}_{ij,k}^T
         \boldsymbol{p}_{ij,k} + \mu_{z_{ij},k}
    \end{equation}
where 

\begin{equation} \label{eq:rel_meas_}
    \left\{
    \begin{array}{l}
         z_{ij,k} = \frac{1}{2} \left( \tilde r_{ij,k}^{\,2} - \tilde r_{ij,k-1}^{\,2} + \| \tilde{\boldsymbol{\delta}}_{ij,k} \|^2 \right)\\
         \tilde{\boldsymbol{\delta}}_{ij,k} = \tilde{\boldsymbol{\delta}}_{i,k} - \tilde{\boldsymbol{\delta}}_{j,k} = \boldsymbol{\delta}_{ij,k} + \boldsymbol{\mu}_{\delta_{ij},k}
    \end{array} 
        \right.
    \end{equation}
with $\boldsymbol{\mu}_{\delta_{ij},k} \sim \mathcal{N}(\boldsymbol{0}, \boldsymbol{R}_{\delta_{ij}})$ is the zero-mean Gaussian measurement noise with the covariance matrix given by $\boldsymbol{R}_{\delta_{ij}} = \boldsymbol{R}_{\delta_{i}} + \boldsymbol{R}_{\delta_{j}}$, and $\mu_{z_{ij},k}$ are the equivalent measurement noises with a zero mean and a covariance matrix $\boldsymbol{R}_{z_{ij},k}$. 

Note that the equivalent measurement noises $\mu_{z_{ij},k}$ in model \eqref{eq:rel_meas} are compound noises with complex properties. The displacement measurement noises from an IMU are typically much smaller than range measurement noise over short intervals, so they are assumed to be negligible. The equivalent measurement noises $\mu_{z_{ij},k}$ arise from the combined effect of linear and quadratic terms. Through derivation, it can be shown that 
\begin{equation} \label{eq:noise_approximation}
    \begin{aligned}
    \mu_{z_{ij},k} \approx &r_{ij,k} \mu_{r_{ij},k} - r_{ij,k-1} \mu_{r_{ij},k-1} \\
    &+ \frac{1}{2} \left(\mu_{r_{ij},k}^2 - \mu_{r_{ij},k-1}^2\right)
    \end{aligned}
\end{equation}
In practical scenarios, the relative distance $r_{ij}$ between USVs typically undergoes only slight variations over a short sampling interval. Consequently, the statistical characteristics of the equivalent noise $\mu_{z_{ij},k}$ are predominantly governed by the terms $(\mu_{r_{ij},k} - \mu_{r_{ij},k-1})$ and $(\mu_{r_{ij},k}^2 - \mu_{r_{ij},k-1}^2)$. Since these terms involve subtracting two identically distributed random variables, structural analysis reveals that, regardless of whether the range measurement noise ${\mu}_{r_{ij},k}$ follows model \eqref{eq:r_noise_model_multi_Gaus} or model \eqref{eq:r_noise_model_logGaus}, the equivalent noise $\mu_{z_{ij},k}$ exhibits the following statistical properties: (i) it is approximately symmetric about zero; (ii) it has a sharp peak near zero; and (iii) it displays heavy-tailed behavior on both sides. 

For robust relative localization between USVs under the non-Gaussian noise in measurement model \eqref{eq:rel_meas}, this paper implements the MCKF. This algorithm addresses the limitations of conventional Kalman filters in heavy-tailed or impulsive noise by adopting the maximum correntropy criterion (MCC). Unlike the minimum mean square error criterion, which uses a quadratic cost function sensitive to outliers, MCC evaluates similarity through a Gaussian kernel-based measure of joint probability density. This approach accounts for higher-order statistics, inherently suppressing large errors and improving estimation accuracy in challenging noise conditions. 

The flowchart of the MCKF-based relative positioning algorithm is presented below, while detailed theoretical foundations can be found in \cite{chenMaximumCorrentropyKalman2017}.

1) Let the prior and posterior estimates of $\boldsymbol{p}_{ij,k}$ at time step $k$ be denoted as $\hat{\boldsymbol p}_{ij,k}^-$ and $\hat{\boldsymbol p}_{ij,k}^+$, respectively, with corresponding error covariances $\boldsymbol P_{ij,k}^-$ and $\boldsymbol P_{ij,k}^+$. For initialization, select appropriate Gaussian kernel bandwidths $\boldsymbol \sigma = [\sigma_1,\sigma_2,\sigma_3]$ and a sufficiently small tolerance $\iota > 0$. The algorithm is initialized with a given initial guess $(\hat{\boldsymbol p}_{ij,0}^+, \boldsymbol P_{ij,0}^+ )$.

2) For $k \geq 1$, USV $\mathcal{V}_i$ obtains its own displacement $\tilde{\boldsymbol{\delta}}_{i,k}$ from ego-motion and sends it to neighbor $\mathcal{V}_j$; simultaneously, it receives $\tilde{\boldsymbol{\delta}}_{j,k}$ form $\mathcal{V}_j$. The relative position is then propagated as follows
\begin{equation} \label{eq:propogation_ij}
    \begin{aligned}
    &\hat{\boldsymbol p}_{ij,k}^-=\hat{\boldsymbol p}_{ij,k-1}^+ + \tilde{\boldsymbol{\delta}}_{ij,k} \\
    &\boldsymbol{P}_{ij,k}^- = \boldsymbol{P}_{ij,k-1}^+ + \boldsymbol{R}_{\delta_{ij}}
    \end{aligned}
\end{equation}

3) Perform Cholesky decompositions on $\boldsymbol{P}_{ij,k}^-$ and $\boldsymbol{R}_{z_{ij},k}$ to obtain $\boldsymbol{B}_{\boldsymbol p_{ij},k}$ and $\boldsymbol{B}_{z_{ij},k}$, respectively, such that
\begin{equation} \label{eq:Cholesky}
    \boldsymbol{P}_{ij,k}^- = \boldsymbol{B}_{\boldsymbol p_{ij},k} \boldsymbol{B}_{\boldsymbol p_{ij},k}^T, \quad
    \boldsymbol{R}_{z_{ij},k} = \boldsymbol{B}_{z_{ij},k} \boldsymbol{B}_{z_{ij},k}^T
\end{equation}
Then, construct the augmented block matrix
\begin{equation}
    \boldsymbol{B}_{ij,k} = \begin{bmatrix} \boldsymbol{B}_{\boldsymbol p_{ij},k} & \mathbf{0} \\ \mathbf{0} & \boldsymbol{B}_{z_{ij},k} \end{bmatrix}
\end{equation}
Further, compute the following matrices 
\begin{equation}
    \boldsymbol{D}_{ij,k} = \boldsymbol{B}_{ij,k}^{-1} 
\begin{bmatrix} \hat{\boldsymbol{p}}_{ij,k}^- \\ z_{ij,k} \end{bmatrix}, \quad
 \boldsymbol{W}_{ij,k} = \boldsymbol{B}_{ij,k}^{-1} 
\begin{bmatrix} \boldsymbol{I}_2 \\ \tilde{\boldsymbol{\delta}}_{ij,k}^T \end{bmatrix}
\end{equation}

4) The posterior estimate $\hat{\boldsymbol p}_{ij,k}^+$ at the time step $k$ is obtained as the convergent value of the iterative sequence $\hat{\boldsymbol p}_{ij,k}^{(s)}$. The sequence is initialized with $\hat{\boldsymbol p}_{ij,k}^{(0)} = \hat{\boldsymbol p}_{ij,k}^-$ and updated for $s>0$ by
\begin{equation} \label{eq:MCKF_iteration}
    \hat{\boldsymbol{p}}_{ij,k}^{(s)} = \hat{\boldsymbol p}_{ij,k}^- + 
    \bar {\boldsymbol{K}}_{ij,k}\bigl(z_{ij,k} - \tilde{\boldsymbol{\delta}}_{ij,k}^T\hat{\boldsymbol{p}}_{ij,k}^-\bigr)
\end{equation}
where the gain matrix $\bar {\boldsymbol{K}}_{ij,k}$ and related terms are given by
\begin{equation*}
\left\{
    \begin{aligned}
    &\bar{\boldsymbol K}_{ij,k}=\bar{\boldsymbol P}_{ij,k}^{-} \tilde{\boldsymbol{\delta}}_{ij,k}
            \big(\tilde{\boldsymbol{\delta}}_{ij,k}^T \bar{\boldsymbol P}_{ij,k}^{-} \tilde{\boldsymbol{\delta}}_{ij,k} + \bar{\boldsymbol R}_{ij,k}\big)^{-1} \\
    &\bar{\boldsymbol P}_{ij,k} ^{-}= \boldsymbol{B}_{\boldsymbol p_{ij},k} (\bar{\boldsymbol{C}}_{\boldsymbol p_{ij},k})^{-1} \boldsymbol{B}_{\boldsymbol p_{ij},k}^T \\
    &\bar{\boldsymbol R}_{ij,k} ^{-}= \boldsymbol{B}_{z_{ij},k} (\bar{\boldsymbol{C}}_{z_{ij},k})^{-1} \boldsymbol{B}_{z_{ij},k}^T \\
    &\bar{\boldsymbol{C}}_{\boldsymbol p_{ij},k} = \mathrm{diag}\left[G_{\sigma_1}(w_{ij,k}^1),G_{\sigma_2}(w_{ij,k}^2)\right]  \\
    &\bar{\boldsymbol{C}}_{z_{ij},k} = \mathrm{diag}\left[G_{\sigma_3}(w_{ij,k}^3)\right] \\
    &\boldsymbol{w}_{ij,k} = \boldsymbol{D}_{ij,k} - \boldsymbol{W}_{ij,k} \hat{\boldsymbol{p}}_{ij,k}^{(s-1)}
    \end{aligned}
    \right.
\end{equation*}
Here, $w_{ij,k}^l$ (for $l={1,2,3}$) denotes the $l$-th element of $\boldsymbol{w}_{ij,k}$, and 
\begin{equation*}
    G_{\sigma_l}(w_{ij,k}^l) = \exp\!\left(-\frac{\big(w_{ij,k}^l\big)^2}{2\,\sigma_l^2}\right)
\end{equation*}
where $G_{\sigma_l}(w_{ij,k}^l)$ is the Gaussian kernel function.

5) The iteration stops once the condition in \eqref{eq:stop_iteration} is satisfied, and the posterior estimate is assigned as $\hat{\boldsymbol p}_{ij,k}^+ = \hat{\boldsymbol p}_{ij,k}^{(s)}$. Otherwise, the index $s$ is incremented, and the iteration continues. To ensure real-time computational determinism, a maximum number of iterations $L_{\max}$ is further imposed as a secondary stopping criterion.
\begin{equation} \label{eq:stop_iteration}
    \frac{\|\hat{\boldsymbol{p}}_{ij,k}^{(s)} - \hat{\boldsymbol{p}}_{ij,k}^{(s-1)}\|}
    {\|\hat{\boldsymbol{p}}_{ij,k}^{(s-1)}\|} \leq \iota
\end{equation}

6) Update the posterior covariance matrix using
\begin{equation} \label{eq:posterior_ij}
    \begin{aligned}
    \boldsymbol{P}_{ij,k}^+ = & (\boldsymbol{I} - \bar{\boldsymbol{K}}_{ij,k}\tilde{\boldsymbol{\delta}}_{ij,k}^T) \boldsymbol{P}_{ij,k}^-(\boldsymbol{I} - \bar {\boldsymbol{K}}_{ij,k}\tilde{\boldsymbol{\delta}}_{ij,k}^T)^T \\
    &+ \bar {\boldsymbol{K}}_{ij,k} {\boldsymbol{R}}_{ij,k}\bar {\boldsymbol{K}}_{ij,k}^T 
    \end{aligned}
\end{equation}
With this, the estimation cycle for time step $k$ is complete. The algorithm then proceeds to time step $k+1$ to acquire new measurements and repeat the updating process.

The kernel bandwidth $\sigma_l$ has a significant influence on the performance of the MCKF algorithm. As $\sigma_l$ approaches positive infinity, the MCKF reduces to the classical Kalman filter. Hence, a large $\sigma_l$ value should be selected, when the measurement noise exhibits mild heavy-tailed characteristics. Conversely, for significant heavy-tailed measurement noises, a small $\sigma_l$ value are suggested to enhance the filtering robustness against outliers.

\subsection{USV–Target Relative State Estimation}
The relative state $\boldsymbol{x}_{io,k}$ between USV $\mathcal{V}_i$ and the target, derived from motion models \eqref{eq:dynamic_USV} and \eqref{eq:dynamic_target}, is defined by
$$ \boldsymbol{x}_{io,k} \triangleq \begin{bmatrix}  \boldsymbol{p}_{io,k} \\ \boldsymbol{v}_{io,k} \end{bmatrix} = \begin{bmatrix}  \boldsymbol{p}_{i,k} - \boldsymbol{p}_{o,k} \\ \boldsymbol{v}_{i,k} - \boldsymbol{v}_{o,k} \end{bmatrix} $$
which comprises the relative position $\boldsymbol{p}_{io,k}$ and relative velocity $\boldsymbol{v}_{io,k}$.

Following the PLKF in \cite{liThreeDimensionalBearingOnlyTarget2022}, the pseudo-linear bearing measurement equation is formulated as
\begin{equation} \label{eq:bearing_PL}
    \boldsymbol{z}_{io,k} \triangleq \mathbf{0}_{2 \times 1} = \boldsymbol{G}_{\tilde{\boldsymbol{b}}_{io},k} \boldsymbol{p}_{io,k} + \boldsymbol{\mu}_{\boldsymbol{z}_{io},k}
\end{equation}
Here, $\boldsymbol{G}_{\tilde{\boldsymbol{b}}_{io},k} = \boldsymbol{I}_{2 \times 2} - \tilde{\boldsymbol{b}}_{io,k} \tilde{\boldsymbol{b}}_{io,k}^T $ is an orthogonal projection operator, and $\boldsymbol{\mu}_{\boldsymbol{z}_{io},k}$ is the equivalent measurement noise given by $\boldsymbol{\mu}_{\boldsymbol{z}_{io},k} = {r}_{io,k} \boldsymbol{G}_{\tilde{\boldsymbol{b}}_{io},k} \boldsymbol{\mu}_{\boldsymbol{b}_{io},k}$, where ${r}_{io,k}=\|\boldsymbol{p}_{o,k} - \boldsymbol{p}_{i,k}\|$ denotes the true distance between $\mathcal{V}_i$ and the target. 
Fig.~\ref{fig:Graphical illustration} illustrates the geometric relationship between the orthogonal projection operator $\boldsymbol{G}_{\boldsymbol{b}{io},k}$ and the relative position vector $\boldsymbol{p}_{io,k}$. The plane spanned by $\boldsymbol{G}_{\boldsymbol{b}{io},k}$ is orthogonal to $\boldsymbol{b}_{io,k}$. Since $\boldsymbol{b}_{io,k}$ represents the unit direction vector of $\boldsymbol{p}_{io,k}$, it follows that $\boldsymbol{G}_{\boldsymbol{b}_{io},k}$ is also orthogonal to $\boldsymbol{p}_{io,k}$, thereby satisfying $\boldsymbol{G}_{\boldsymbol{b}{io},k} \boldsymbol{p}_{io,k} = \mathbf{0}$. However, due to measurement noise, $\boldsymbol{G}_{\tilde{\boldsymbol{b}}_{io},k}$ is not perfectly orthogonal to $\boldsymbol{p}_{io,k}$, leading to a deviation captured by the measurement model in \eqref{eq:bearing_PL}.

\begin{figure}[tbp]
  \centering
  \includegraphics[width=0.3\textwidth]{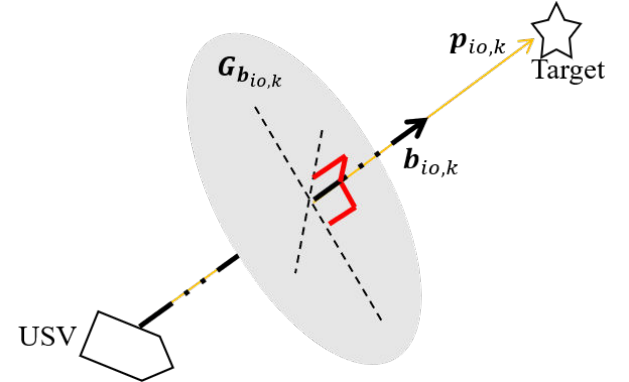}
  \caption{Schematic diagram illustrating the geometric relationship where the bearing vector ${\boldsymbol{b}}_{io,k}$ is orthogonal to its projection matrix $\boldsymbol{G}_{{\boldsymbol{b}}_{io,k}} = \boldsymbol{I}  - \boldsymbol{b}_{io,k}\boldsymbol{b}_{io,k}^T $.}
  \label{fig:Graphical illustration}
\end{figure}

To estimate the relative state $\boldsymbol{x}_{io,k}$, we formulate the final measurement equation via matrix augmentation as
\begin{equation} \label{eq:bearing_state_PL}
    \boldsymbol{z}_{io,k} \triangleq\mathbf{0}_{2 \times 1} =  \begin{bmatrix}  \boldsymbol{G}_{\tilde{\boldsymbol{b}}_{io},k} & \mathbf{0}_{2 \times 2} \end{bmatrix} \boldsymbol{x}_{io,k} + \boldsymbol{\mu}_{\boldsymbol{z}_{io},k}
\end{equation}

Thus, the propagation step for the state estimate $\boldsymbol{x}_{io,k}$ is performed as follows
\begin{equation} \label{eq:propogation_oi}
    \begin{aligned}
    &\hat{\boldsymbol x}_{io,k}^- = \boldsymbol{A} \hat{\boldsymbol x}_{io,k-1}^+  \\
    &\boldsymbol{P}_{io,k}^- = \boldsymbol{A} \boldsymbol{P}_{io,k-1}^+ \boldsymbol{A}^T + \boldsymbol{Q}_{o}
    \end{aligned}
\end{equation}
Here, $\hat{\boldsymbol x}_{io,k}^-$ and $\hat{\boldsymbol x}_{io,k}^+$ are the prior and posterior state estimates, and $\boldsymbol{P}_{io,k}^-$ and $\boldsymbol{P}_{io,k}^+$ are their corresponding error covariance matrices.

Since the true distance ${r}_{io,k}$ is unavailable, the predicted distance $\hat{r}_{io,k}^- = \| \left[\boldsymbol{I} \quad  \mathbf{0}\right] \hat{\boldsymbol x}_{io,k}^-\|$ is used instead, which yields the noise covariance $\boldsymbol{R}_{\boldsymbol{z}_{io},k} = (\hat{r}_{io,k}^-)^2 \boldsymbol{G}_{\tilde{\boldsymbol{b}}_{io},k} \boldsymbol{R}_{\boldsymbol{b}_{io}} \boldsymbol{G}_{\tilde{\boldsymbol{b}}_{io},k}^T$ of $\boldsymbol{\mu}_{\boldsymbol{z}_{io},k}$. With $\boldsymbol{H}_{io,k} \triangleq \begin{bmatrix}  \boldsymbol{G}_{\tilde{\boldsymbol{b}}_{io},k} & \mathbf{0}_{2 \times 2} \end{bmatrix}$, the correction update is then performed
\begin{equation} \label{eq:update_oi}
    \begin{aligned}
    &\boldsymbol{K}_{io,k} = \boldsymbol{P}_{io,k}^- \boldsymbol{H}_{io,k}^T \left( \boldsymbol{H}_{io,k} \boldsymbol{P}_{io,k}^- \boldsymbol{H}_{io,k}^T + \boldsymbol{R}_{\boldsymbol{z}_{io},k} \right)^\dagger  \\
    &\hat{\boldsymbol x}_{io,k}^+ = \hat{\boldsymbol{x}}_{io,k}^{-} - \boldsymbol{K}_{io,k} \boldsymbol{H}_{io,k} \hat{\boldsymbol{x}}_{io,k}^{-} \\
    &\boldsymbol{P}_{io,k}^+ = \left( \boldsymbol{I} - \boldsymbol{K}_{io,k} \boldsymbol{H}_{io,k} \right) \boldsymbol{P}_{io,k}^- 
    \end{aligned}
\end{equation}
where $\dagger$ represents the pseudoinverse.

It should be noted that vision-based bearing measurements may fail due to obstruction by obstacles. In this case, the PLKF reverts to a prediction-only step
\begin{equation} \label{eq:predic_only}
    \hat{\boldsymbol{x}}_{io,k}^{+}=\hat{\boldsymbol{x}}_{io,k}^{-}, \qquad \boldsymbol{P}_{io,k}^+=\boldsymbol{P}_{io,k}^-
\end{equation}
This skips the measurement update to prevent the injection of spurious data. The resulting increase in $\boldsymbol{P}_{io,k}^+$ properly reflects the growing state uncertainty. Standard PLKF updates resume once valid bearing measurements are restored.

\section{\textsc{Circumnavigation Control}} \label{section_control}
To achieve coordinated circumnavigation, a two-stage control strategy is designed for each USV using the relative estimates from the previous section. This strategy first employs a coupled oscillator model to autonomously negotiate desired relative positions, then applies a formation control law to realize the maneuver. Additionally, a vector-field-based obstacle avoidance strategy is incorporated for safe navigation.

An overview of the proposed heterogeneous perception and control architecture for each USV is illustrated in Fig.~\ref{fig:framework}. Specifically, for vehicle $\mathcal{V}_i$, the relative inter-USV localization is estimated using the MCKF, based on the received information $\boldsymbol{\delta}_{j,k}$ from $\mathcal{V}_j$ via communication network. Meanwhile, the relative state between the USV and the target is estimated using the PLKF. Concurrently, $\mathcal{V}_i$ receives the phase angle $\varphi_{j,k}$ from neighboring USVs through communication, which is then used in the coupled oscillator model to generate the reference relative positions for circular formation. These estimates, along with the reference formation geometry, are subsequently fed into the circumnavigation control module. This architecture effectively addresses the asymmetry inherent in heterogeneous sensing while ensuring stable circumnavigation performance in complex environments.

\begin{figure}[tbp]
  \centering
  \includegraphics[width=0.47\textwidth]{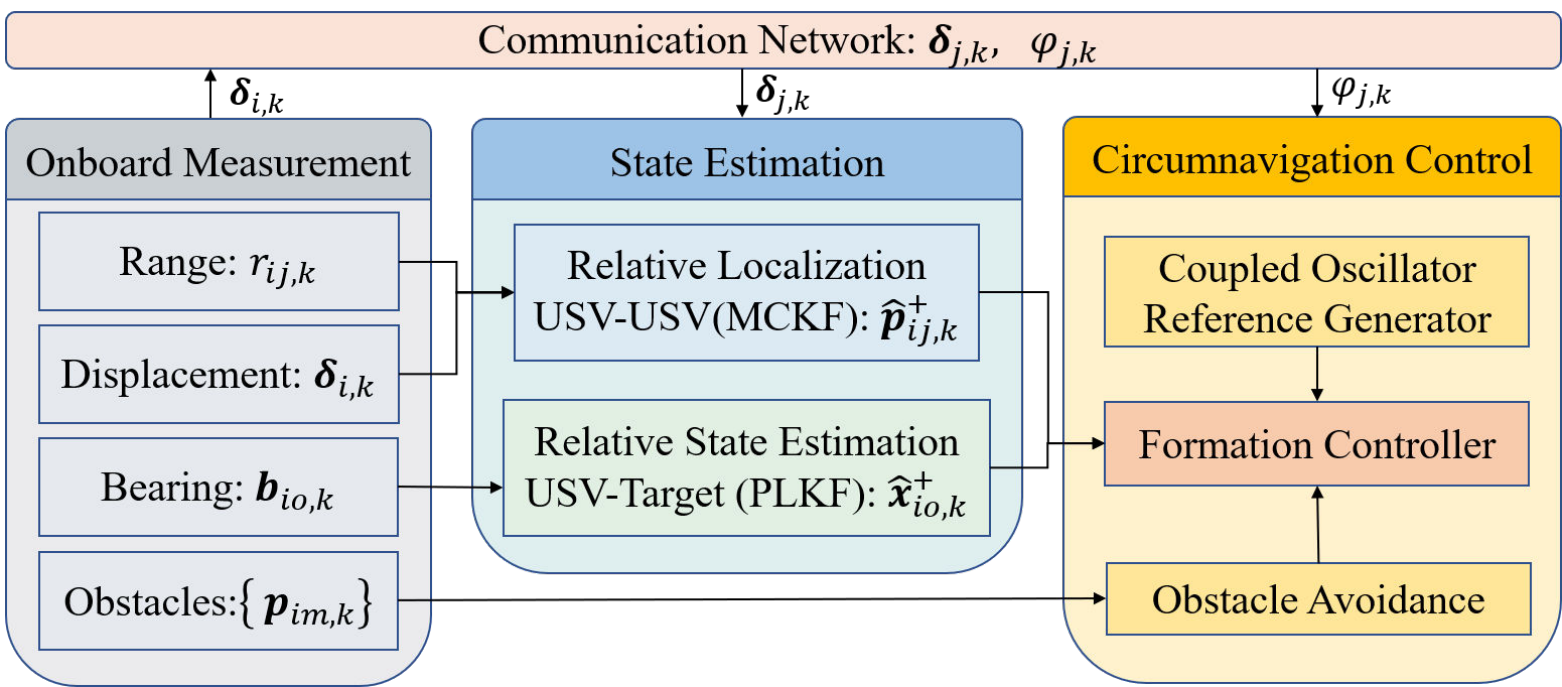}
  \caption{System architecture of the proposed USV circumnavigation framework. Each USV relies solely on onboard measurements to achieve relative state estimation for both USV-USV and USV-target interactions, and then implements circumnavigation control based on the estimated values and the desired references generated by a coupled oscillator model. The USVs exchange critical information with one another via a communication network.}
  \label{fig:framework}
\end{figure}

\subsection{Desired Relative Positions Design}
The desired relative position between USV $\mathcal{V}_i$ and the target at time step $k$ is defined as
    \begin{equation}\label{eq:desired_pos}
        \boldsymbol{p}_{io,k}^* = r_{io}^*  \left[ \begin{array}{cc} \cos{\varphi_{i,k}} \\  \sin{\varphi_{i,k}} \\ \end{array} \right]
    \end{equation}
where $r_{io}^*$ is the specified circumnavigation radius and $\varphi_{i,k}$ is the desired phase angle. To achieve autonomous negotiation of a uniform distribution around the target, a coupled oscillator model \cite{liuFormationControlMoving2023,dorflerSynchronizationComplexNetworks2014} is employed, wherein neighboring USVs exchange their desired phase angles. The phase dynamics are governed by
	\begin{equation}\label{eq:osc}
		\begin{split}
			\varphi_{i,k+1} &= \varphi_{i,k} + \omega_{i}^*\Delta t \\ &+ \Delta t \sum_{j = 1}^{n} \sum_{m = 1}^{n}  \frac{K_{m}}{mn}\sin{\left(m\left[\varphi_{i,k}-\varphi_{j,k}\right]\right)}
		\end{split}
	\end{equation}
where $K_{m}\in\mathbb{R}$ (for $m\in \{1,\ldots,n \}$) are appropriate coupling gains, and $\omega_{i}^*>0$ denotes the constant desired yaw rate. Thus, the corresponding desired relative velocity between USV $\mathcal{V}_i$ and the target is
    \begin{equation}\label{eq:desired_vel_i0}
        \boldsymbol{v}_{io,k}^* = \frac{1}{\Delta t} (\boldsymbol{p}_{io,k+1}^* - \boldsymbol{p}_{io,k}^*)
    \end{equation}
Furthermore, the desired relative position between $\mathcal{V}_i$ and $\mathcal{V}_j$ is given by
    \begin{equation}\label{eq:desired_pos_ij}
        \boldsymbol{p}_{ij,k}^{*} = \boldsymbol{p}_{io,k}^{*} - \boldsymbol{p}_{jo,k}^{*}
    \end{equation}

Under the assumption that each USV $\mathcal{V}_i$ communicates mutually with all its neighbors, the system \eqref{eq:osc} converges exponentially fast to a stable state \cite[Theorem 5.4]{dorflerSynchronizationComplexNetworks2014}. Furthermore, when the system encounters dynamic changes such as USVs joining or leaving, it can promptly re-converge to a new equilibrium, thereby maintaining a uniform distribution of USVs around the target \cite{liuFormationControlMoving2023,liuCooperativeCircumnavigationMultiQuadrotor2025}.

\subsection{Cooperative Formation Control}
Based on the estimated relative positions $\hat{\boldsymbol p}_{ij,k}^+$, relative states $\hat{\boldsymbol{x}}_{io,k}^+ \triangleq \begin{bmatrix} (\hat{\boldsymbol{p}}_{io,k}^+)^T & (\hat{\boldsymbol{v}}_{io,k}^+)^T \end{bmatrix}^T$, and the desired relative positions, a displacement-based formation controller \cite{ohSurveyMultiagentFormation2015,liuFormationControlMoving2023} for each USV $\mathcal{V}_i$ is designed as follows
\begin{equation} \label{eq:controller}
\left\{
    \begin{aligned}
    &v_{i,k} = \begin{bmatrix} \cos \theta_{i,k} & \sin \theta_{i,k} \end{bmatrix} \boldsymbol{v}_{i,k}^c  \\
    &\omega_{i,k} = \text{sat}(U_{\omega_i^c},\omega_{i,k}^c) + \omega_{i}^*
    \end{aligned}
    \right.
\end{equation}
Here, $\text{sat}(U,\boldsymbol{u}) = \left( \min \{U,\| \boldsymbol{u}\|\}/\|\boldsymbol{u}\| \right) \boldsymbol{u}$ defines a saturation function, and $U_{\omega_i^c}>0$ bounds the correction yaw rate $\omega_{i,k}^c$, given by
$$\omega_{i,k}^c = \frac{1}{\Delta t} \left( \operatorname{arccos} \left( \frac{ \begin{bmatrix} 1 & 0 \end{bmatrix} \boldsymbol{v}_{i,k}^c}{\|\boldsymbol{v}_{i,k}^c\|} \right)-\theta_{i,k}\right)$$
The desired velocity command $\boldsymbol{v}_{i,k}^c$ is given by
\begin{equation} \label{eq:desirsed_velocity}
    \begin{aligned}
    \boldsymbol{v}_{i,k}^c = \text{sat}(U_{\boldsymbol{v}_{i}^f},\boldsymbol{v}_{i,k}^f) + \boldsymbol{v}_{i,k}^t
    \end{aligned}
\end{equation}
where 
\begin{equation}\label{eq:desirsed_velocity_details}
\left\{\begin{array}{l}
    \boldsymbol{v}_{i,k}^f =  \sum_{j \in \mathcal{N}_i \cup \{o\}} \kappa_{ij} \left( \boldsymbol{p}_{ij,k}^{*} - \hat{\boldsymbol{p}}_{ij,k}^+ \right) \\
     \boldsymbol{v}_{i,k}^t = \text{sat}(U_{\boldsymbol{v}_{o}},\hat{\boldsymbol{v}}_{o,k}^i) + \boldsymbol{v}_{io,k}^*
\end{array}
\right.
\end{equation}
In \eqref{eq:desirsed_velocity_details}, $\kappa_{ij}>0$ represent control gains, and $U_{\boldsymbol{v}_{i}^f} > 0$ is the bound of $\boldsymbol{v}_{i,k}^f$. Specifically, $\boldsymbol{v}_{i,k}^f$ is a consensus-based formation control term that depends on relative positions, while $\boldsymbol{v}_{i,k}^t$ is a feedforward tracking term comprising the estimated target velocity $\hat{\boldsymbol{v}}_{o,k}^i = \boldsymbol{v}_{i,k} - \hat{\boldsymbol{v}}_{io,k}^+$ and the desired relative velocity $\boldsymbol{v}_{io,k}^*$. The estimated target velocity $\hat{\boldsymbol{v}}_{o,k}^+$ is reasonably bounded by $U_{\boldsymbol{v}_{o}}$, consistent with the actual target velocity. Thus, the velocity of each USV is bounded by 
\begin{equation*}
    U_{\boldsymbol{v}_{i}} = U_{\boldsymbol{v}_{i}^f} + U_{\boldsymbol{v}_{o}} + U_{\boldsymbol{v}_{io}^*}
\end{equation*} 
From the form of $U_{\boldsymbol{v}_{i}}$, it can be seen that the velocity of the USV is greater than that of the target, thereby enabling the tracking task. 
In addition, $U_{\boldsymbol{v}_{o}}$ and $U_{\boldsymbol{v}_{i}}$ are determined by the target's maximum speed and the USV's physical capabilities, respectively. In the controller design, $U_{\boldsymbol{v}_{io}^*}$ should be set as large as possible, and $U_{\boldsymbol{v}_{i}^f}$ as relatively small. This parameter tuning ensures that the PE generated by $\boldsymbol{v}_{io,k}^*$  can effectively counterbalance the effects of $\boldsymbol{v}_{i,k}^f$ and $\hat{\boldsymbol{v}}_{o,k}^i$. Consequently, it guarantees that the relative motion induced by the controller satisfies the PE condition, a point that will be further analyzed in Section V.

\subsection{Obstacle Avoidance Control}
In practical applications, USVs must avoid obstacles such as reefs. To this end, each USV $\mathcal{V}_i$ is assumed to be capable of acquiring environmental obstacle information in real-world deployments using sensors such as omnidirectional sonar. This information includes, for example, the contours of reefs or islands. The obstacle boundaries are represented by a set of $M$ discrete points relative to the vehicle $\{ \boldsymbol{p}_{im,k} \}$ for $m = \{1,\ldots,M\}$. It is worth noting that the relative position of each obstacle boundary point is typically not obtained directly. For sensors such as sonar or LiDAR, these positions are derived by combining bearing and range information. A detailed treatment of this derivation is beyond the scope of this paper, and a simplified approach is adopted here. A collision avoidance mechanism is triggered if $\|\boldsymbol{p}_{im,k}\| < d_{s}$, where $d_{s}$ is the safety distance.

The collision-free control term based on the vector field \cite{zhouAgileCoordinationAssistive2018,Feng2025IRA} is given by
    \begin{equation}\label{eq:collision-free}
        \boldsymbol{v}_{i,k}^{s} = \sum_{m=1}^M k_1 \exp \left(-\frac{\left\lVert \boldsymbol{p}_{im,k} \right\rVert^2 }{2k_2 d_{s}^2}\right) \frac{\boldsymbol{p}_{im,k}}{\left\lVert \boldsymbol{p}_{im,k} \right\rVert}
    \end{equation}
where $k_1,k_2\in \mathbb{R}$ are positive scalar gains, selected according to the safety distance $d_{s}$. To incorporate obstacle avoidance, the safety term $\boldsymbol{v}_{i,k}^{s}$ is directly added to the control law in \eqref{eq:desirsed_velocity}, yielding
\begin{equation} \label{eq:desirsed_velocity_cofree}
    \begin{aligned}
    \boldsymbol{v}_{i,k}^c = \text{sat}(U_{\boldsymbol{v}_{i}^f},\boldsymbol{v}_{i,k}^f) + \boldsymbol{v}_{i,k}^t + \text{sat}(U_{\boldsymbol{v}_{i}^s},\boldsymbol{v}_{i,k}^{s})
    \end{aligned}
\end{equation}
where $U_{\boldsymbol{v}_{i}^s}>0$ is the upper bound for the collision avoidance term. Consequently, the final controller is identical in form to that defined in \eqref{eq:controller}.

\section{\textsc{Observability Analysis}} \label{section_Analysis}
In this section, we present the analysis results on the observability of the MCKF and PLKF under the cooperative circumnavigation control algorithm given in Section \ref{section_control}. The analysis is performed by assuming the coupled oscillator \eqref{eq:osc} has reached its equilibrium after some time step $k_s$. 
In addition, measurement noises are neglected to focus on the theoretical analysis.
It should be emphasized that observability is a prerequisite for the Kalman filtering framework to estimate the state from measurements. The effects of non-Gaussian noise on estimation accuracy and robustness are considered only under the premise that observability is guaranteed.
The following assumptions are further introduced to support our follow-up analysis.

\begin{assumption} \label{assump:colinear}
    The desired yaw rate $\omega_i^*$ is sufficiently small, ensuring that for some integer $N \geq 2$ and for any $l \geq k_s$, the vectors in $\{\boldsymbol{v}_{io,k}^*\}_{k=l}^{l+N-1}$ are pairwise non-collinear; that is, for any two distinct time steps $\{k_1,k_2\} \subseteq \{l,l+1,\ldots,l+N-1\}$, $\boldsymbol{v}_{io,k_1}^*$ and $\boldsymbol{v}_{io,k_2}^*$ are non-collinear.
\end{assumption}

Since the sampling interval $\Delta t$ is typically constrained by sensor capabilities, for a given $\Delta t$, desired circumnavigation radius $r_{io}^*$, and maximum USV speed $U_{\boldsymbol{v}_i}$, Assumption~\ref{assump:colinear} is readily satisfied in practice. Consequently, the USV requires at least $N$ time steps to complete one full circumnavigation of the target. 
Based on Assumption \ref{assump:colinear}, the desired relative velocities $ \boldsymbol{v}_{io,k}^* $ and $ \boldsymbol{v}_{ij,k}^* = \boldsymbol{v}_{io,k}^* - \boldsymbol{v}_{jo,k}^* $ satisfy the PE condition. Formally, there exist $\lambda_{io,2}^* \geq \lambda_{io,1}^* > 0, \lambda_{ij,2}^* \geq \lambda_{ij,1}^* > 0$ such that for all $l \geq k_s $
    \begin{equation}\label{eq:pe_vi}
       \lambda_{io,1}^* \boldsymbol{I} \leqslant \boldsymbol S_{io,l}^* \triangleq \sum_{k=l}^{l+N-1}{\boldsymbol{v}_{io,k}^* ({\boldsymbol{v}_{io,k}^*})^{T} }\leqslant \lambda_{io,2}^* \boldsymbol{I}
    \end{equation}
    \begin{equation}\label{eq:pe_vij}
       \lambda_{ij,1}^* \boldsymbol{I} \leqslant \boldsymbol S_{ij,l}^* \triangleq \sum_{k=l}^{l+N-1}{\boldsymbol{v}_{ij,k}^* ({\boldsymbol{v}_{ij,k}^*})^{T} }\leqslant \lambda_{ij,2}^* \boldsymbol{I}
    \end{equation}
A geometric interpretation of this relation is provided in \cite{liuFormationControlEnclosing2025a}. Here, $\lambda_{io,1}^*$ and $\lambda_{io,2}^*$ (or $\lambda_{ij,1}^*$ and $\lambda_{ij,2}^*$) denote the minimum and maximum eigenvalues of $\boldsymbol S_{io,l}^*$ (or $\boldsymbol S_{ij,l}^*$), respectively. These eigenvalues are determined by the shift phase $\omega_i^* \Delta t$, the specified radius $r_{io}^*$ and the window size $N$. If $N\omega_i^* \Delta t\leq 2\pi$, we have
\begin{equation} \label{eq:eigenvalues}
\left\{
    \begin{aligned}
    &\lambda_{io,1}^*,\lambda_{io,2}^* = N U_{\boldsymbol{v}_{io}^*}^2 \left(\frac{1}{2} \pm \frac{\sin{ [(N-1)\omega_i^* \Delta t} ]}{2(N-1)\omega_i^* \Delta t} \right) \\
    &\lambda_{ij,1}^*,\lambda_{ij,2}^* = N U_{\boldsymbol{v}_{ij}^*}^2 \left(\frac{1}{2} \pm \frac{\sin{ [(N-1)\omega_i^* \Delta t} ]}{2(N-1)\omega_i^* \Delta t} \right)
    \end{aligned}
\right.
\end{equation}
where $U_{\boldsymbol{v}_{io}^*}=\frac{2 r_{io}^* }{\Delta t} \sin{ \left( \frac{\omega_{i}^*\Delta t}{2} \right)}, U_{\boldsymbol{v}_{ij}^*}=\frac{ 2 U_{\boldsymbol{v}_{io}^*} }{\Delta t}\sin{ \left( \frac{\omega_{i}^*\Delta t}{2} \right)}$ are the Euclidean norms of $\boldsymbol{v}_{io,k}^*$ and $\boldsymbol{v}_{ij,k}^*$, respectively.

Based on the preceding analysis, we now demonstrate that under controller \eqref{eq:controller}, the relative velocities between USVs, $\boldsymbol{v}_{ij,k} = \boldsymbol{v}_{i,k} - \boldsymbol{v}_{j,k}$, and those between each USV $\mathcal{V}_i$ and the target, $\boldsymbol{v}_{io,k} = \boldsymbol{v}_{i,k} - \boldsymbol{v}_{o,k}$, also satisfy the PE condition. 

\begin{lemma} \label{lemma:pe_vij}
    If \eqref{eq:pe_vij} holds and appropriate values for $\omega_i^* \Delta t$ and $r_{io}^*$ are chosen such that $\lambda_{ij,1}^* > \sqrt{N} \left(U_{\boldsymbol{v}_{i}^f} + U_{\boldsymbol{v}_{j}^f} + 2U_{\boldsymbol{v}_{o}}\right)$, then the relative velocities $\boldsymbol{v}_{ij,k}$ between USVs satisfies the following PE condition for $l \geq k_s$
    \begin{equation}\label{eq:pe_vij_}
       \lambda_{ij,1} \boldsymbol{I} \leqslant \boldsymbol S_{ij,l} \triangleq \sum_{k=l}^{l+N-1}{\boldsymbol{v}_{ij,k} \boldsymbol{v}_{ij,k}^{T} }\leqslant \lambda_{ij,2} \boldsymbol{I}
    \end{equation}
    where 
    \begin{equation*}
    \lambda_{ij,1} = \left[\lambda_{ij,1}^* - \sqrt{N} \left(U_{\boldsymbol{v}_{i}^f} + U_{\boldsymbol{v}_{j}^f} + 2U_{\boldsymbol{v}_{o}}\right) \right]^2
    \end{equation*} 
    and 
    \begin{equation*}
    \lambda_{ij,2} = \left[\lambda_{ij,2}^* + \sqrt{N} \left(U_{\boldsymbol{v}_{i}^f} + U_{\boldsymbol{v}_{j}^f} + 2U_{\boldsymbol{v}_{o}}\right) \right]^2
    \end{equation*}
\end{lemma}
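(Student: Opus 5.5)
We plan to treat $\boldsymbol{v}_{ij,k}$ as a bounded perturbation of the desired relative velocity $\boldsymbol{v}_{ij,k}^*$ and to transfer the PE bounds \eqref{eq:pe_vij} through a singular-value perturbation argument on the stacked velocity matrix. Neglecting noise and taking $k\geq k_s$ (oscillator at equilibrium), we substitute the controller \eqref{eq:controller}--\eqref{eq:desirsed_velocity_details} into the velocity. We assume that the heading tracks the commanded direction, so that $\boldsymbol{v}_{i,k}=\boldsymbol{v}_{i,k}^c$. This gives
\begin{equation*}
\boldsymbol{v}_{ij,k} = \boldsymbol{v}_{ij,k}^* + \boldsymbol{e}_{ij,k},
\end{equation*}
with
\begin{equation*}
\boldsymbol{e}_{ij,k} = \text{sat}(U_{\boldsymbol{v}_{i}^f},\boldsymbol{v}_{i,k}^f) - \text{sat}(U_{\boldsymbol{v}_{j}^f},\boldsymbol{v}_{j,k}^f) + \text{sat}(U_{\boldsymbol{v}_{o}},\hat{\boldsymbol{v}}_{o,k}^i) - \text{sat}(U_{\boldsymbol{v}_{o}},\hat{\boldsymbol{v}}_{o,k}^j).
\end{equation*}
The triangle inequality and the saturation bounds then give $\|\boldsymbol{e}_{ij,k}\|\le U_{\boldsymbol{v}_{i}^f}+U_{\boldsymbol{v}_{j}^f}+2U_{\boldsymbol{v}_{o}} \triangleq \Delta_{ij}$ for every $k$.

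Next we stack the window. We define $\boldsymbol{V}_{ij,l}^*=[\boldsymbol{v}_{ij,l}^*,\ldots,\boldsymbol{v}_{ij,l+N-1}^*]\in\mathbb{R}^{2\times N}$, and define $\boldsymbol{V}_{ij,l}$ and $\boldsymbol{E}_{ij,l}$ analogously. Then $\boldsymbol S_{ij,l}=\boldsymbol{V}_{ij,l}\boldsymbol{V}_{ij,l}^T$, $\boldsymbol S_{ij,l}^*=\boldsymbol{V}_{ij,l}^*(\boldsymbol{V}_{ij,l}^*)^T$, and $\boldsymbol{V}_{ij,l}=\boldsymbol{V}_{ij,l}^*+\boldsymbol{E}_{ij,l}$. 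The perturbation is controlled by $\|\boldsymbol{E}_{ij,l}\|_2\le\|\boldsymbol{E}_{ij,l}\|_F\le\sqrt{N}\Delta_{ij}$, since each column has norm at most $\Delta_{ij}$. By Weyl's inequality for singular values,
\begin{equation*}
\sigma_{\min}(\boldsymbol{V}_{ij,l})\ge\sigma_{\min}(\boldsymbol{V}_{ij,l}^*)-\sqrt{N}\Delta_{ij},\qquad \sigma_{\max}(\boldsymbol{V}_{ij,l})\le\sigma_{\max}(\boldsymbol{V}_{ij,l}^*)+\sqrt{N}\Delta_{ij}.
\end{equation*}
The eigenvalues of $\boldsymbol S_{ij,l}$ are the squares of these singular values. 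Under the hypothesis $\lambda_{ij,1}^*>\sqrt{N}\Delta_{ij}$ the lower bound is positive, so squaring is monotone and yields the bracketed-square form of $\lambda_{ij,1}$ and $\lambda_{ij,2}$.

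The main obstacle is matching the stated constants exactly. The rigorous chain naturally produces $\sigma_{\min}(\boldsymbol{V}_{ij,l}^*)=\sqrt{\lambda_{ij,1}^*}$, not $\lambda_{ij,1}^*$, whereas the statement subtracts $\sqrt{N}\Delta_{ij}$ from $\lambda_{ij,1}^*$ itself. I would first try to reconcile this by redefining the PE constants in \eqref{eq:pe_vij} as bounds on the singular values of $\boldsymbol{V}_{ij,l}^*$. Failing that, I would prove the lemma with $\sqrt{\lambda_{ij,1}^*}$ and $\sqrt{\lambda_{ij,2}^*}$ in place of $\lambda_{ij,1}^*$ and $\lambda_{ij,2}^*$, and note that the statement follows verbatim under the singular-value reading of the constants. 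The qualitative conclusion is unaffected: PE holds whenever the oscillator-induced excitation dominates the saturated formation and target-velocity terms. A secondary subtlety is the assumption $\boldsymbol{v}_{i,k}=\boldsymbol{v}_{i,k}^c$. If heading lag is kept, its residual would simply be absorbed into $\Delta_{ij}$ as an additional bounded term.
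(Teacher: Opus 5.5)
Your argument is essentially the paper's. The paper writes $\boldsymbol{v}_{ij,k}=\boldsymbol{v}_{ij,k}^*+\boldsymbol{\pi}_{ij,k}$, where $\boldsymbol{\pi}_{ij,k}$ is exactly your $\boldsymbol{e}_{ij,k}$ with the same bound, and it likewise tacitly takes $\boldsymbol{v}_{i,k}=\boldsymbol{v}_{i,k}^c$. It then expands $\boldsymbol{y}^T\boldsymbol{S}_{ij,l}\boldsymbol{y}$ for a unit vector $\boldsymbol{y}$ and applies Cauchy--Schwarz, which is just the elementary proof of the Weyl singular-value bound you invoke.

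Your concern about the constants is correct. The paper simply asserts $\lambda_{ij,1}^*\le\|\boldsymbol{V}_{ij,l}^T\boldsymbol{y}\|\le\lambda_{ij,2}^*$, whereas \eqref{eq:pe_vij} only bounds the squared norm. So the lemma as literally stated, which is even dimensionally inconsistent, needs your correction: either use $\sqrt{\lambda_{ij,1}^*}$ and $\sqrt{\lambda_{ij,2}^*}$, or read the constants as singular-value bounds.
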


\begin{IEEEproof}
    See Appendix A.
\end{IEEEproof}

\begin{lemma} \label{lemma:pe_vio}
    If \eqref{eq:pe_vi} holds and appropriate values for $\omega_i^* \Delta t$ and $r_{io}^*$ are chosen such that $\lambda_{io,1}^* > \sqrt{N} \left(U_{\boldsymbol{v}_{i}^f} + 2U_{\boldsymbol{v}_{o}}\right)$, then the relative velocities $\boldsymbol{v}_{io,k}$ between $\mathcal{V}_i$ and the target satisfies the following PE condition for $l \geq k_s$
    \begin{equation}\label{eq:pe_vi_}
       \lambda_{io,1} \boldsymbol{I} \leqslant \boldsymbol S_{io,l} \triangleq \sum_{k=l}^{l+N-1}{\boldsymbol{v}_{io,k} \boldsymbol{v}_{io,k}^{T} }\leqslant \lambda_{io,2} \boldsymbol{I}
    \end{equation}
    where 
     \begin{equation*}
    \lambda_{io,1} = \left[\lambda_{io,1}^* - \sqrt{N} \left(U_{\boldsymbol{v}_{i}^f} + 2U_{\boldsymbol{v}_{o}}\right) \right]^2
    \end{equation*}
    and 
    \begin{equation*}
    \lambda_{io,2} = \left[\lambda_{io,2}^* + \sqrt{N} \left(U_{\boldsymbol{v}_{i}^f} + 2U_{\boldsymbol{v}_{o}}\right) \right]^2
    \end{equation*}
\end{lemma}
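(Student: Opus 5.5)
We argue exactly as for Lemma~\ref{lemma:pe_vij}: treat the actual relative velocity as a bounded perturbation of the desired one, and bound the perturbation via singular values of a stacked matrix. For $k\geq k_s$ we write
\begin{equation*}
\boldsymbol{v}_{io,k} = \boldsymbol{v}_{io,k}^* + \boldsymbol{e}_{io,k}, \qquad \boldsymbol{e}_{io,k} \triangleq \boldsymbol{v}_{i,k} - \boldsymbol{v}_{o,k} - \boldsymbol{v}_{io,k}^*.
\end{equation*}
By the controller \eqref{eq:desirsed_velocity}--\eqref{eq:desirsed_velocity_details}, the commanded velocity is $\boldsymbol{v}_{i,k}^c = \text{sat}(U_{\boldsymbol{v}_{i}^f},\boldsymbol{v}_{i,k}^f) + \text{sat}(U_{\boldsymbol{v}_{o}},\hat{\boldsymbol{v}}_{o,k}^i) + \boldsymbol{v}_{io,k}^*$. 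As in the proof of Lemma~\ref{lemma:pe_vij}, we take the realized velocity $\boldsymbol{v}_{i,k}$ to equal $\boldsymbol{v}_{i,k}^c$; the heading term $\omega_i^*$ together with the correction $\omega_{i,k}^c$ aligns $\theta_{i,k}$ with $\boldsymbol{v}_{i,k}^c$, so the surge projection in \eqref{eq:controller} reproduces it. Then
\begin{equation*}
\boldsymbol{e}_{io,k} = \text{sat}(U_{\boldsymbol{v}_{i}^f},\boldsymbol{v}_{i,k}^f) + \text{sat}(U_{\boldsymbol{v}_{o}},\hat{\boldsymbol{v}}_{o,k}^i) - \boldsymbol{v}_{o,k}.
\end{equation*}
Since $\|\boldsymbol{v}_{o,k}\|\le U_{\boldsymbol{v}_o}$, this gives $\|\boldsymbol{e}_{io,k}\| \le U_{\boldsymbol{v}_{i}^f} + 2U_{\boldsymbol{v}_{o}} \triangleq \beta_{io}$.

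Next we stack the window into $2\times N$ matrices
\begin{equation*}
\boldsymbol{V}_{io,l}=[\boldsymbol{v}_{io,l},\ldots,\boldsymbol{v}_{io,l+N-1}], \quad \boldsymbol{V}^*_{io,l}=[\boldsymbol{v}^*_{io,l},\ldots,\boldsymbol{v}^*_{io,l+N-1}], \quad \boldsymbol{E}_{io,l}=[\boldsymbol{e}_{io,l},\ldots,\boldsymbol{e}_{io,l+N-1}],
\end{equation*}
so that $\boldsymbol S_{io,l}=\boldsymbol{V}_{io,l}\boldsymbol{V}_{io,l}^T$ and $\boldsymbol{V}_{io,l}=\boldsymbol{V}^*_{io,l}+\boldsymbol{E}_{io,l}$. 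The perturbation is controlled through the Frobenius norm:
\begin{equation*}
\|\boldsymbol{E}_{io,l}\|_2\le\|\boldsymbol{E}_{io,l}\|_F\le\sqrt{N}\,\beta_{io}.
\end{equation*}
Weyl's inequality for singular values then gives
\begin{equation*}
\sigma_{\min}(\boldsymbol{V}^*_{io,l})-\sqrt{N}\beta_{io}\le\sigma_{\min}(\boldsymbol{V}_{io,l}) \quad\text{and}\quad \sigma_{\max}(\boldsymbol{V}_{io,l})\le\sigma_{\max}(\boldsymbol{V}^*_{io,l})+\sqrt{N}\beta_{io}.
\end{equation*}
Squaring these, using that the eigenvalues of $\boldsymbol S_{io,l}$ are the squared singular values of $\boldsymbol{V}_{io,l}$, yields $\lambda_{\min}(\boldsymbol S_{io,l})\ge\lambda_{io,1}$ and $\lambda_{\max}(\boldsymbol S_{io,l})\le\lambda_{io,2}$.

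The main obstacle is reconciling this with the statement. From \eqref{eq:pe_vi} the natural quantities are $\sigma_{\min}(\boldsymbol{V}^*_{io,l})=\sqrt{\lambda_{io,1}^*}$ and $\sigma_{\max}(\boldsymbol{V}^*_{io,l})=\sqrt{\lambda_{io,2}^*}$, whereas the lemma uses $\lambda_{io,1}^*$ and $\lambda_{io,2}^*$ themselves. The upper bound holds either way when $\lambda^*_{io,2}\ge1$, since then $\sqrt{\lambda^*_{io,2}}\le\lambda^*_{io,2}$. For the lower bound, our first approach is to follow the same convention as Lemma~\ref{lemma:pe_vij}, interpreting the stated hypothesis as requiring $\sqrt{\lambda_{io,1}^*}>\sqrt{N}\beta_{io}$. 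That inequality is exactly what guarantees $\sigma_{\min}(\boldsymbol{V}_{io,l})>0$, and it is where the condition on $\omega_i^*\Delta t$ and $r_{io}^*$ enters, through the explicit eigenvalues \eqref{eq:eigenvalues}. If instead the statement must be kept literally, we would check that the regime of interest has $\lambda^*_{io,1}\le1$, so that $\lambda^*_{io,1}\le\sqrt{\lambda^*_{io,1}}$ and the stated $\lambda_{io,1}$ is a valid, weaker lower bound; this would be stated as an explicit side condition.

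Finally, unlike Lemma~\ref{lemma:pe_vij}, the target velocity $\boldsymbol{v}_{o,k}$ does not cancel here. That is why the perturbation bound is $U_{\boldsymbol{v}_{i}^f}+2U_{\boldsymbol{v}_o}$: one $U_{\boldsymbol{v}_o}$ comes from the saturated feedforward estimate and one from the true target velocity, and no $U_{\boldsymbol{v}_j^f}$ term appears. The bound holds for every $l\ge k_s$, which completes the argument.
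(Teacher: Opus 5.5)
Your argument is the paper's. Appendix~B writes $\boldsymbol{v}_{io,k}=\boldsymbol{\pi}_{io,k}+\boldsymbol{v}_{io,k}^*$, with $\boldsymbol{\pi}_{io,k}$ equal to your $\boldsymbol{e}_{io,k}$ and the same bound $\beta_{io}=U_{\boldsymbol{v}_{i}^f}+2U_{\boldsymbol{v}_{o}}$. Then, via Appendix~A, it expands $\boldsymbol{y}^T\boldsymbol{S}_{io,l}\boldsymbol{y}$ for unit $\boldsymbol{y}$ and applies Cauchy--Schwarz, which is a hand-proved instance of the Weyl singular-value bound you cite.

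Your concern about $\lambda^*$ versus $\sqrt{\lambda^*}$ is correct and identifies a genuine error in the paper's proof. Appendix~A asserts $\lambda_{ij,1}^*\le\|\boldsymbol{V}_{ij,l}^T\boldsymbol{y}\|\le\lambda_{ij,2}^*$, whereas \eqref{eq:pe_vij}, and \eqref{eq:pe_vi} here, only give $\lambda^*_{1}\le\|\boldsymbol{V}^T\boldsymbol{y}\|^2\le\lambda^*_{2}$. The lemma as literally stated can fail. For a stationary target with zero formation error and $\hat{\boldsymbol{v}}_{o,k}^i=\mathbf{0}$, one has $\boldsymbol{e}_{io,k}=\mathbf{0}$, hence $\lambda_{\min}(\boldsymbol{S}_{io,l})=\lambda_{io,1}^*$. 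This lies below the claimed $(\lambda_{io,1}^*-\sqrt{N}\beta_{io})^2$ whenever $\lambda_{io,1}^*-\sqrt{N}\beta_{io}>\sqrt{\lambda_{io,1}^*}$, for instance $\lambda_{io,1}^*=4$ and $\sqrt{N}\beta_{io}=1$, which satisfies the lemma's hypothesis. Similarly, the upper bound can fail when $\lambda_{io,2}^*<1$ and $\beta_{io}$ is small. So a side condition such as your $\lambda_{io,1}^*\le 1\le\lambda_{io,2}^*$ is genuinely required for the stated constants. Without it, the correct conclusion, which your Weyl argument proves cleanly, is $(\sqrt{\lambda_{io,1}^*}-\sqrt{N}\beta_{io})^2\boldsymbol{I}\le\boldsymbol{S}_{io,l}\le(\sqrt{\lambda_{io,2}^*}+\sqrt{N}\beta_{io})^2\boldsymbol{I}$ under $\sqrt{\lambda_{io,1}^*}>\sqrt{N}\beta_{io}$.

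Two smaller points. First, do not write ``squaring yields $\lambda_{\min}(\boldsymbol{S}_{io,l})\ge\lambda_{io,1}$'' before the caveat. Under your first option you are proving the corrected statement above, not reinterpreting the hypothesis of the stated one, so say that explicitly. Second, the identification $\boldsymbol{v}_{i,k}=\boldsymbol{v}_{i,k}^c$ is an idealization the paper also makes without comment. Your heading-alignment sentence does not derive it, for three reasons: the $\omega_i^*$ feedforward is added on top of the correction; the correction is saturated and only takes effect at the next step; and $\arccos$ loses the sign of the lateral component. State it as an assumption instead.
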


\begin{IEEEproof}
    See Appendix B.
\end{IEEEproof}

To satisfy the PE conditions in Lemma~\ref{lemma:pe_vij} and Lemma~\ref{lemma:pe_vio}, the desired radius $r_{io}^*$ should be increased in response to a larger upper bound on the target's speed, $U_{\boldsymbol{v}_o}$. For a fixed $r_{io}^*$, the PE degree can be enhanced by increasing the desired yaw rate $\omega_i^*$ within the USV's maximum speed constraint $U_{\boldsymbol{v}_i}$, which correspondingly elevates the relative velocity bounds $U_{\boldsymbol{v}_{io}^*}$ and $U_{\boldsymbol{v}_{ij}^*}$.
It is important to note that the theoretical analysis provides sufficient, not necessary, conditions for PE. The condition may still be satisfied even if the parameters do not meet the criteria outlined in the lemmas. Nevertheless, this analysis offers clear and practical guidelines for parameter selection. The core principle is to select sufficiently large values for $r_{io}^*$ and $\omega_i^*$ such that the resulting relative speed bound $U_{\boldsymbol{v}_{io}^*}$ is large enough to dominate the combined effects of the USV's pursuit speed $U_{\boldsymbol{v}_{i}^f}$ and the target speed $U_{\boldsymbol{v}_o}$, thereby fulfilling the PE condition. This requirement is substantially relaxed for stationary or uniformly moving targets.

Based on Lemmas \ref{lemma:pe_vij} and \ref{lemma:pe_vio}, we now present the main observability results for the MCKF and PLKF. Measurement noise is neglected in this analysis to focus on the system's structural properties.

\begin{theorem} \label{theorem:MCKF_ob}
    Under the controller \eqref{eq:controller} and Assumption \ref{assump:colinear}, the relative positions $\boldsymbol{p}_{ij,k}$ between $\mathcal{V}_i$ and $\mathcal{V}_j$ are observable in the MCKF system, which incorporates the process model \eqref{eq:propogation_ij} and the measurement model \eqref{eq:rel_meas}. Specifically, the Gramian matrix for $l \geq k_s$
    \begin{equation}\label{eq:Gramian_MCKF}
       \boldsymbol \Phi_{ij,l} \triangleq \sum_{k=l}^{l+N-1}{\boldsymbol{\delta}_{ij,k} \boldsymbol{\delta}_{ij,k}^{T} } > 0
    \end{equation}    
    is positive definite.
\end{theorem}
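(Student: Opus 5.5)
The plan is to reduce the Gramian $\boldsymbol\Phi_{ij,l}$ directly to the velocity PE matrix $\boldsymbol S_{ij,l}$ of Lemma~\ref{lemma:pe_vij}.

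\textbf{Step 1: displacements are scaled velocities.} From the discrete surge-yaw model \eqref{eq:dynamic_USV} and the velocity definition \eqref{eq:USV_wel}, the true displacement of each USV is $\boldsymbol{\delta}_{i,k} = \boldsymbol{p}_{i,k}-\boldsymbol{p}_{i,k-1} = \boldsymbol{v}_{i,k-1}\Delta t$. Hence
\begin{equation*}
\boldsymbol{\delta}_{ij,k} = \Delta t\,\boldsymbol{v}_{ij,k-1}.
\end{equation*}
Substituting, the Gramian \eqref{eq:Gramian_MCKF} becomes
\begin{equation*}
\boldsymbol\Phi_{ij,l} = \Delta t^2 \sum_{k=l-1}^{l+N-2} \boldsymbol{v}_{ij,k}\boldsymbol{v}_{ij,k}^T = \Delta t^2\, \boldsymbol S_{ij,l-1}.
\end{equation*}
The one-step index shift only requires starting the window at $l\ge k_s+1$. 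Equivalently, one can restate the claim for $l \ge k_s$ with the window shifted by one, since PE holds for every window past $k_s$.

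\textbf{Step 2: invoke Lemma~\ref{lemma:pe_vij}.} Assumption~\ref{assump:colinear} yields \eqref{eq:pe_vij}, i.e. $\boldsymbol S^*_{ij,l}\ge\lambda^*_{ij,1}\boldsymbol I>0$. With the parameter condition of Lemma~\ref{lemma:pe_vij} in force (implicit in the controller design), the lemma gives $\boldsymbol S_{ij,l-1}\ge \lambda_{ij,1}\boldsymbol I$ with $\lambda_{ij,1}>0$. Therefore $\boldsymbol\Phi_{ij,l}\ge \Delta t^2\lambda_{ij,1}\boldsymbol I>0$, which is the claimed positive definiteness.

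\textbf{Step 3: interpret as observability.} The MCKF has state transition matrix $\boldsymbol I$ in \eqref{eq:propogation_ij}, because the displacement enters only as a known input. Its measurement row in the noise-free version of \eqref{eq:rel_meas} is $\boldsymbol\delta_{ij,k}^T$. The standard discrete observability Gramian over the window $[l,l+N-1]$ is therefore exactly $\sum_k \boldsymbol\delta_{ij,k}\boldsymbol\delta_{ij,k}^T$. Its uniform positive definiteness for all $l$ gives uniform complete observability of $\boldsymbol p_{ij,k}$. The MCKF's kernel reweighting only rescales the measurement covariance by positive factors $G_{\sigma_l}(\cdot)\in(0,1]$, so it does not affect this structural property.

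\textbf{Main obstacle.} The substantive content, namely that the saturated formation and target-tracking terms cannot cancel the excitation of $\boldsymbol v^*_{ij,k}$, is already absorbed in Lemma~\ref{lemma:pe_vij}. Two points remain delicate. First, one must confirm that the controller indeed makes $\boldsymbol v_{i,k}$ equal to $\boldsymbol v^c_{i,k}$ (or a bounded perturbation of it), as used in the lemma. Second, the index shift between displacement and velocity must be tracked correctly.
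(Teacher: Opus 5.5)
Your proposal is correct and follows essentially the same route as the paper's proof: write $\boldsymbol\delta_{ij,k}$ as $\Delta t$ times a relative velocity, then transfer the PE bound of Lemma~\ref{lemma:pe_vij} to $\boldsymbol\Phi_{ij,l}$. Your bookkeeping is tighter than the paper's, which writes $\boldsymbol\delta_{ij,k}=\boldsymbol v_{ij,k}\Delta t$ and a lower bound of $\lambda_{ij,1}\Delta t\,\boldsymbol I$; the motion model actually gives $\boldsymbol\delta_{ij,k}=\Delta t\,\boldsymbol v_{ij,k-1}$, and hence a bound of $\Delta t^2\lambda_{ij,1}\boldsymbol I$ for windows starting at $l\ge k_s+1$, exactly as you derived.
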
 
\begin{IEEEproof}
    See Appendix C.
\end{IEEEproof}

To further analyze the relative state estimation between the USV and the target, we first present the following assumptions.
\begin{assumption} \label{assump:maneuverability}
    It is assumed that the distance between each USV and the target satisfies $d_{s} \leq \| \boldsymbol{p}_{io,k} \|  \leq U_{\boldsymbol{p}_{io}}$. Furthermore, parameters $\omega_i^*, \Delta t$ and $r_{io}^*$ can be selected such that $ \lambda_{io,1} > \Delta t \sqrt{N} U_{\boldsymbol{p}_{io}} $
\end{assumption}

Given the finite perception range of onboard sensors, the distance between a USV and its target is reasonably assumed to be bounded above by $U_{\boldsymbol{p}_{io}}$. This bound typically decreases as the USV approaches the target during a tracking mission. Furthermore, as indicated by \eqref{eq:eigenvalues}, $\lambda_{io,1}$ is approximately proportional to $(r_{io}^*)^4$. This relationship suggests that system performance can be enhanced by initializing the task with a larger desired circumnavigation radius $r_{io}^*$. In addition, the desired yaw rate $\omega_i^*$ should be selected as large as possible and the sampling frequency should be increased (\emph{i.e.}, $\Delta t$ reduced), subject to the system's physical constraints. Notably, this parameter selection criterion is consistent with the one obtained from Lemma~\ref{lemma:pe_vij} and Lemma~\ref{lemma:pe_vio}.

\begin{theorem} \label{theorem:PLKF_ob}
    Under the controller \eqref{eq:controller} and subject to Assumption \ref{assump:maneuverability} and Lemma \ref{lemma:pe_vio}, the relative positions $\boldsymbol{p}_{io,k}$ between $\mathcal{V}_i$ and the unknown maneuvering target are observable in the PLKF system, which integrates the process model \eqref{eq:propogation_oi} and the measurement model \eqref{eq:bearing_PL}. Specifically, the Gramian matrix for $l \geq k_s$
    \begin{equation}\label{eq:Gramian_PLKF}
       \boldsymbol \Phi_{io,l} \triangleq \sum_{k=l}^{l+N-1}  \boldsymbol{G}_{{\boldsymbol{b}}_{io},k}^T \boldsymbol{G}_{{\boldsymbol{b}}_{io},k} > 0
    \end{equation}    
    is positive definite, where $ \boldsymbol{G}_{{\boldsymbol{b}}_{io},k} = \boldsymbol{I}  - \boldsymbol{b}_{io,k}\boldsymbol{b}_{io,k}^T $ without considering noises. 
\end{theorem}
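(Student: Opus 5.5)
We argue by contradiction: assume $\boldsymbol\Phi_{io,l}$ is singular and show this forces the relative motion over the window to violate the PE bound of Lemma~\ref{lemma:pe_vio} once Assumption~\ref{assump:maneuverability} is in force. Since each $\boldsymbol{G}_{\boldsymbol{b}_{io},k}$ is a symmetric idempotent projection, $\boldsymbol{G}^T\boldsymbol{G}=\boldsymbol{G}$, so $\boldsymbol\Phi_{io,l}=\sum_{k=l}^{l+N-1}(\boldsymbol I-\boldsymbol b_{io,k}\boldsymbol b_{io,k}^T)$. For a unit vector $\boldsymbol u$,
\[
\boldsymbol u^T\boldsymbol\Phi_{io,l}\boldsymbol u=\sum_{k=l}^{l+N-1}\bigl(1-(\boldsymbol u^T\boldsymbol b_{io,k})^2\bigr)\ge 0,
\]
and this vanishes iff $\boldsymbol b_{io,k}=\pm\boldsymbol u$ for every $k$ in the window. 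So singularity is equivalent to all relative positions $\boldsymbol p_{io,k}$, $k=l,\dots,l+N-1$, lying on one line through the origin spanned by $\boldsymbol u$.

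Next, relate this collinearity to the relative velocities. From the kinematics \eqref{eq:dynamic_USV} and \eqref{eq:dynamic_target}, $\boldsymbol p_{io,k+1}-\boldsymbol p_{io,k}=\Delta t\,\boldsymbol v_{io,k}+\boldsymbol\eta_k$. Here $\boldsymbol\eta_k$ collects the mismatch between the target's actual displacement and $\Delta t\,\boldsymbol v_{o,k}$, and it vanishes in the noise-free analysis. If all $\boldsymbol p_{io,k}$ lie on $\mathrm{span}\{\boldsymbol u\}$, then every increment lies there too, so $\boldsymbol u_\perp^T\boldsymbol v_{io,k}=0$ for the orthogonal unit vector $\boldsymbol u_\perp$ and every $k$ in the window. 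This gives $\boldsymbol u_\perp^T\boldsymbol S_{io,l}\boldsymbol u_\perp=0$, contradicting $\boldsymbol S_{io,l}\ge\lambda_{io,1}\boldsymbol I$ from Lemma~\ref{lemma:pe_vio}. The window is indexed so that $N$ velocities determine $N$ positions; I would handle the off-by-one by using the $N-1$ increments together with the last position and a bound on the one missing term.

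To make the argument quantitative, and to use Assumption~\ref{assump:maneuverability} explicitly, I would instead lower-bound $\boldsymbol u^T\boldsymbol\Phi_{io,l}\boldsymbol u$ directly. Write $1-(\boldsymbol u^T\boldsymbol b_{io,k})^2=(\boldsymbol u_\perp^T\boldsymbol p_{io,k})^2/\|\boldsymbol p_{io,k}\|^2\ge(\boldsymbol u_\perp^T\boldsymbol p_{io,k})^2/U_{\boldsymbol p_{io}}^2$. The scalar sequence $s_k=\boldsymbol u_\perp^T\boldsymbol p_{io,k}$ has increments $\Delta t\,\boldsymbol u_\perp^T\boldsymbol v_{io,k}$, whose sum of squares is at least $\Delta t^2\lambda_{io,1}$ by Lemma~\ref{lemma:pe_vio}. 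By Cauchy--Schwarz, $\sqrt{\sum_k(s_{k+1}-s_k)^2}\le 2\sqrt{\sum_k s_k^2}$ (plus one boundary term bounded by $U_{\boldsymbol p_{io}}$), so $\sqrt{\sum_k s_k^2}$ is bounded below by roughly $\tfrac12\bigl(\Delta t\sqrt{\lambda_{io,1}}-U_{\boldsymbol p_{io}}\bigr)$. The condition $\lambda_{io,1}>\Delta t\sqrt N U_{\boldsymbol p_{io}}$ in Assumption~\ref{assump:maneuverability} should be what keeps this strictly positive. The lower bound $d_s$ ensures every $\boldsymbol b_{io,k}$ is well defined.

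The main obstacle is this last step: matching the exact constant in Assumption~\ref{assump:maneuverability} to the boundary and window-shift terms. If the bookkeeping fails, I will fall back on the qualitative contradiction argument, which already establishes positive definiteness.
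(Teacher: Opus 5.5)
Your reduction is correct and a bit cleaner than the paper's. Since $\boldsymbol u^T\boldsymbol\Phi_{io,l}\boldsymbol u=\sum_k(\boldsymbol u_\perp^T\boldsymbol b_{io,k})^2$, singularity means the $N$ positions are collinear with the origin. Your projection onto $\boldsymbol u_\perp$ is the same device as the paper's rotated vector $\boldsymbol q_{io,k}$. Your quantitative route also follows the paper's skeleton: PE of $\boldsymbol v_{io,k}$, then PE of $\boldsymbol p_{io,k}$, then PE of $\boldsymbol b_{io,k}$ after dividing by $U_{\boldsymbol p_{io}}^2$, then $\boldsymbol\Phi_{io,l}>0$. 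The only difference is that your difference/Cauchy--Schwarz step replaces the paper's reuse of the Lemma~\ref{lemma:pe_vij} perturbation template.

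The problem is that neither of your closing arguments works from the stated hypotheses. The qualitative fallback does not ``already establish'' positive definiteness, because the off-by-one is not bookkeeping. Collinearity of $\boldsymbol p_{io,l},\dots,\boldsymbol p_{io,l+N-1}$ only forces $\boldsymbol v_{io,l},\dots,\boldsymbol v_{io,l+N-2}$ onto $\mathrm{span}\{\boldsymbol u\}$. So you get $\boldsymbol u_\perp^T\boldsymbol S_{io,l}\boldsymbol u_\perp=(\boldsymbol u_\perp^T\boldsymbol v_{io,l+N-1})^2$, not $0$. That is fully compatible with $\boldsymbol S_{io,l}\geq\lambda_{io,1}\boldsymbol I$; take relative motion that is radial for $N-1$ steps and tangential on the last. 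Absorbing the missing term needs an extra hypothesis, for example $\lambda_{io,1}>\sup_k\|\boldsymbol v_{io,k}\|^2$, or PE of $\boldsymbol v_{io,k}$ over windows of length $N-1$. Assumption~\ref{assump:colinear} does pass to sub-windows when $N\geq 3$, but the dominance condition of Lemma~\ref{lemma:pe_vio} would have to be re-imposed for the shorter window.

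Your quantitative bound is correctly derived, but it needs $\Delta t\sqrt{\lambda_{io,1}}>U_{\boldsymbol p_{io}}$. This does not follow from Assumption~\ref{assump:maneuverability}'s $\lambda_{io,1}>\Delta t\sqrt N\,U_{\boldsymbol p_{io}}$, whose two sides are not even dimensionally comparable, so no reindexing will make the constants match. The paper does not derive its constant either. It asserts, by analogy with Lemma~\ref{lemma:pe_vij}, the lower bound $(\lambda_{io,1}-\Delta t\sqrt N U_{\boldsymbol p_{io}})^2$ for $\sum_k\boldsymbol p_{io,k}\boldsymbol p_{io,k}^T$. Writing $\boldsymbol p_{io,k+1}=\Delta t\boldsymbol v_{io,k}+\boldsymbol p_{io,k}$ and carrying that template out correctly instead yields $(\Delta t\sqrt{\lambda_{io,1}}-\sqrt N U_{\boldsymbol p_{io}})^2$ on a window shifted by one. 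That demands more than your condition. So you must state your own sufficient condition rather than appeal to Assumption~\ref{assump:maneuverability} as written.

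Be aware that $\Delta t\sqrt{\lambda_{io,1}}>U_{\boldsymbol p_{io}}$ is demanding. Since $2\lambda_{io,1}\leq\operatorname{tr}\boldsymbol S_{io,l}$, it forces $\sum_k\|\boldsymbol p_{io,k+1}-\boldsymbol p_{io,k}\|^2>2U_{\boldsymbol p_{io}}^2$ within a single window. Steady circumnavigation with small $\omega_i^*\Delta t$ does not deliver that. The geometric repair is therefore the more useful one: require two non-collinear relative velocities among any $N-1$ consecutive steps. With it, your contradiction argument closes without any constants.
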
 

\begin{IEEEproof}
    See Appendix D.
\end{IEEEproof}

The analysis shows that to ensure observability, the USV must possess sufficiently high maneuverability to overcome the interference caused by unknown target maneuvers, thereby guaranteeing sufficiently rich relative motion. A direct corollary is that if the target is stationary or moving at constant velocity, its velocity also becomes observable. In practice, for targets with unknown maneuvers or when considering the influence of real-world factors such as water currents, the state estimation of the target can only be confined within a certain range. Nevertheless, this is sufficient to meet the requirements of most rescue and similar missions.

\section{\textsc{Simulation}} \label{section_sim}

This section presents simulation studies designed to comprehensively validate the effectiveness and robustness of the proposed relative localization and cooperative circumnavigation framework. 
The performance of the inter-USV relative positioning algorithm is first verified under non-Gaussian noise conditions, followed by a validation of the overall framework. All simulations are conducted on a personal computer equipped with an Intel Core i7-10850H CPU clocked at 2.70GHz and 32GB of RAM.

\renewcommand{\arraystretch}{1.3}
\begin{table}[htbp]
\centering
\caption{PARAMETERS IN SIMULATION}
\label{tab:parameters}
\begin{tabular}{@{}ll@{}}
\toprule
\textbf{Parameter} & \textbf{Value}  \\ \midrule
Sampling period $\Delta t$ & $0.1$s\\
Desired yaw rate $\omega_i^*$ & $\pi/30$rad/s \\
Safety distance $d_s$ & $3.0$m \\
Physical upper bounds $U_{\omega_i^c}, U_{\boldsymbol{v}_{i}}, U_{\boldsymbol{v}_{o}}$ & $\pi/3,5,0.5$\\
Controller upper bounds $U_{\boldsymbol{v}_{i}^f}, U_{\boldsymbol{v}_{i}^s}$ & $1.5,0.7$\\
Formation gains $\kappa_{ij}, \kappa_{io}$ & $1.5, 0.9$ \\
Collision avoidance gains $k_1, k_2$ & $1.7, 0.5$ \\
Target process noise $\boldsymbol{Q}_o$ & $0.01\boldsymbol{I}$ \\
Bearing measurement noise $\boldsymbol{R}_{\boldsymbol{b}_{io}}$ & $10^{-4}\boldsymbol{I}$ \\
Displacement measurement noise $\boldsymbol{R}_{\boldsymbol\delta_{i}}$ & $10^{-6}I$ \\
MCKF kernel bandwidths $[\sigma_1, \sigma_2, \sigma_3]$ & $[9999, 9999, 0.8]$ \\
MCKF stopping threshold $\iota$ & $10^{-6}$\\
MCKF maximum iterations $L_{\max}$ & $10$\\
Initial covariance $\boldsymbol{P}_{ij,0}^+$ & $25\boldsymbol{I}$ \\
Initial covariance $\boldsymbol{P}_{io,0}^+$ & $\boldsymbol{I}$ \\ \bottomrule
\end{tabular}
\end{table}

The principal simulation parameters are summarized in Table \ref{tab:parameters}. The initial estimates of the inter-USV relative positions are generated as $\hat{\boldsymbol{p}}_{ij,0}^+ \sim \mathcal{N}(\boldsymbol{p}_{ij,0}, \boldsymbol{P}_{ij,0}^+)$. Likewise, the initial estimate of the relative state $\boldsymbol{x}_{io}$ between USV and the target is randomly drawn from $\hat{\boldsymbol{x}}_{io,0}^+ \sim \mathcal{N}(\boldsymbol{x}_{io,0}, \boldsymbol{P}_{io,0}^+)$. In the implementation of the MCKF, the process noise follows a Gaussian distribution. Accordingly, the kernel bandwidth parameters $\sigma_1$ and $ \sigma_2 $ are set to sufficiently large values such that the MCKF behaves similarly to the standard Kalman filter. In contrast, to address the heavy-tailed characteristics of the equivalent measurement noise, the corresponding kernel bandwidth $\sigma_3$ is set to a small value, thereby enhancing the filter's robustness against outliers. In addition, the required number of iterations for MCKF is generally small and related to $\iota$ \cite{chenMaximumCorrentropyKalman2017}. Typically, the smaller the $\iota$, the larger the number of iterations. In the simulations of this paper, the maximum number of iterations does not exceed 4, and conservatively, $L_{\max}=10$ is chosen.

The following performance metrics are introduced to quantify the estimation and tracking performance.

1) Relative position estimation error $e^p_{ij,k}$ among
USVs are introduced to quantify the relative localization accurracy, which are defined as
\begin{equation*}
   e^p_{ij,k}= \max\limits_{i\in\mathcal{V}_i} \left( \max\limits_{j\in\mathcal{N}_i} \|{\hat{\boldsymbol{p}}_{ij,k}^+ - \boldsymbol{p}_{ij,k}\|} \right) 
\end{equation*}
where a smaller value of $e^p_{ij,k}$ implies better relative localization performance.

2) Relative position estimation error $e^p_{io,k}$ and relative velocity estimation error $e^v_{io,k}$ between the USV and the target are introduced to quantify the target state estimation performance.
\begin{equation*}
\left\{\begin{array}{l}
e^p_{io,k} = \max\limits_{i\in\mathcal{V}_i} \|\hat{\boldsymbol{p}}_{io,k}^+ - \boldsymbol{p}_{io,k}\|
\\
e^v_{io,k} = \max\limits_{i\in\mathcal{V}_i} \|\hat{\boldsymbol{v}}_{io,k}^+ - \boldsymbol{v}_{io,k}\|
\end{array}
\right.
\end{equation*}
where smaller values of $e^p_{io,k}$ and $e^v_{io,k}$ indicate better state estimation performance.

3) Tracking error $e_{c,k}$ between the formation center trajectory and the target position are employed to characterize the overall position tracking performance of the cooperative circumnavigation controller, which is defined as 
\begin{equation*}
e_{c,k} = \left\|\frac{1}{n}\sum_{i\in\mathcal{V}_i} {\boldsymbol{p}}_{i,k} - \boldsymbol{p}_{o,k}\right\|
\end{equation*}
where a smaller value of $e_{c,k}$ shows better tracking performance.

Additionally, the instant at which all USVs have first converged to the desired circular formation such that their radii lie within $\pm 10\%$ of the desired radius, and thereafter remain within this bound for the duration of the simulation is referred to as the settling time. Furthermore, the average execution time per iteration, denoted as Avg. Run Time, is used to validate the real-time feasibility and computational efficiency of the proposed framework.

\subsection{Comparison of Inter-USV Relative Localization}
The performance of the proposed MCKF algorithm is evaluated against several baseline estimators, including standard methods such as Recursive Least Squares (RLS) \cite{nguyen2020} and the classical Kalman Filter (KF), as well as robust approaches including the Adaptive Kalman Filter (AKF) \cite{Huang2018Adaptive} and Robust Least Squares with a Huber loss function (Huber-RLS) \cite{tong2023functional}. While AKF targets a different robustness aspect than the heavy-tailed measurement noise considered in this paper, it is included as a representative adaptive robust filter to provide a broader comparison landscape. In contrast, Huber-RLS directly relevant to heavy-tailed noise and serves as a direct robust comparator. To assess the effectiveness of the proposed algorithm, we consider two distinct noise models in simulation. In addition, comparative experiments are conducted using a real-world UWB dataset.

\textit{1) Performance under different range measurement noise models for inter-USV relative localization:} We compare the relative position estimation results obtained by the aforementioned estimators under two different range measurement noise models: the Gaussian mixture model \eqref{eq:r_noise_model_multi_Gaus} and the log-normal distribution model \eqref{eq:r_noise_model_logGaus}.
For the Gaussian mixture model \eqref{eq:r_noise_model_multi_Gaus}, we set $\epsilon=0.2, \boldsymbol{R}_{{\mu}_{r_{ij}}}=0.4, \boldsymbol{R}_{{\mu}_{r_{ij}}}^{outliers}=10$; for the log-normal distribution model \eqref{eq:r_noise_model_logGaus}, we set $\boldsymbol{R}_{{\mu}_{r}}=0.7, r_s = 0.75$. To isolate the effect of noise characteristics on inter-USV relative localization, we consider a scenario in which two USVs perform circular motion along desired trajectories with $r_{io}^*=5$m and $\omega_i^*=\pi/30$rad/s, in the absence of a target.

\begin{figure}[htbp]
    \centering
    \includegraphics[width=0.48\textwidth]{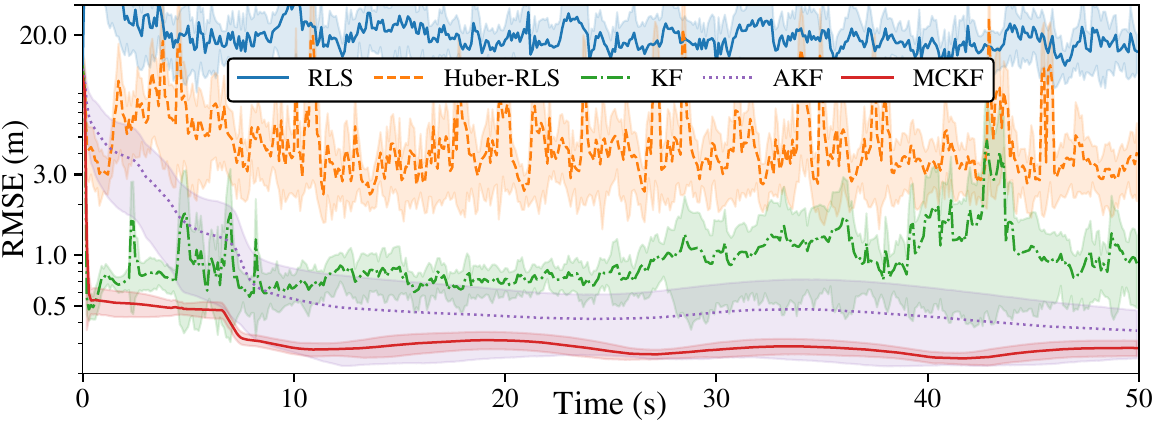}
    \caption{RMSE of relative position estimation under Gaussian mixture model. The shaded regions represent $\pm$1 standard deviation across Monte Carlo runs. The vertical axis is in logarithmic coordinates.}
    \label{fig:rmse_mixgaus}
\end{figure}

\begin{figure}[htbp]
    \centering
    \includegraphics[width=0.48\textwidth]{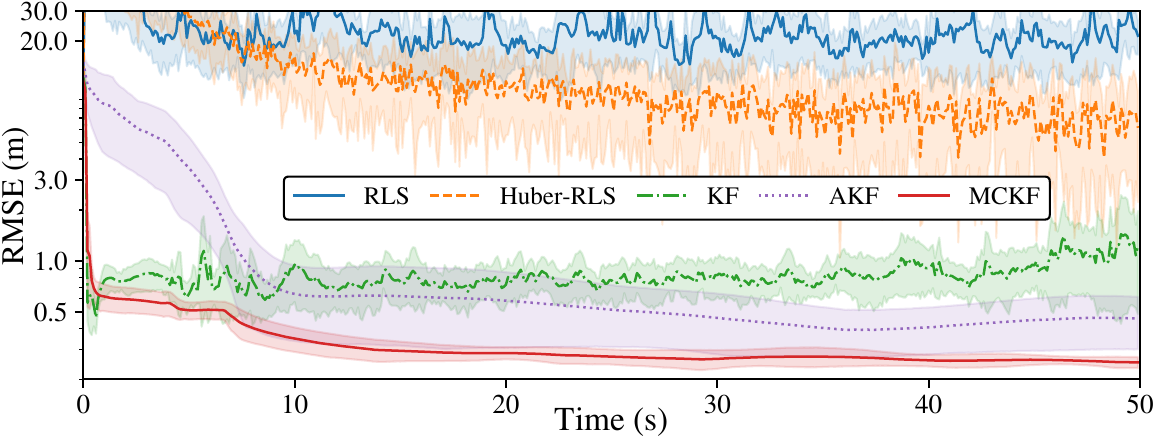}
    \caption{RMSE of relative position estimation under log-normal distribution model. The shaded regions represent $\pm$1 standard deviation across Monte Carlo runs. The vertical axis is in logarithmic coordinates.}
    \label{fig:rmse_loggaus}
\end{figure}

\begin{figure}[htbp]
    \centering
    \includegraphics[width=0.48\textwidth]{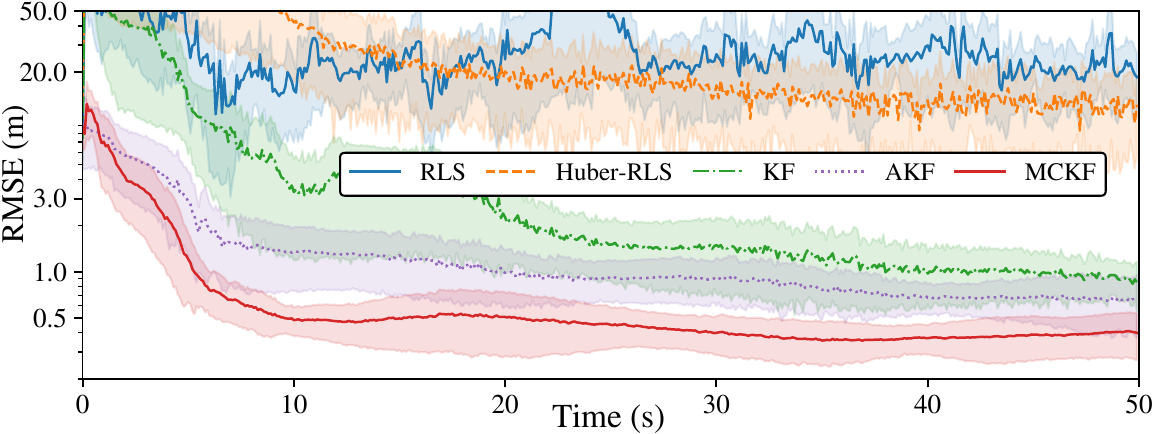}
    \caption{RMSE of relative position estimation with real-world UWB dataset. The shaded regions represent $\pm$1 standard deviation across Monte Carlo runs. The vertical axis is in logarithmic coordinates.}
    \label{fig:UWBnoise}
\end{figure}

A total of 100 Monte Carlo simulations are conducted, and the estimation accuracy is quantified using the root-mean-square error (RMSE). The RMSE results of the different relative localization algorithms are shown in Fig.~\ref{fig:rmse_mixgaus} and Fig.~\ref{fig:rmse_loggaus} for the Gaussian mixture model and the log-normal distribution model, respectively. A comparison of the two figures reveals that, despite the differences in the underlying range measurement noise models, the relative performance trends among the estimators remain consistent. The underlying reason is that, under both noise models, the resulting equivalent measurement noise $\mu_{z_{ij},k}$ exhibits similar statistical characteristics—specifically, a symmetric heavy-tailed distribution with a sharp peak near zero.
Notably, the RLS-based estimator yields a significantly higher RMSE compared to those based on the Kalman filter framework, accompanied by larger variance and less stable estimation results. In contrast, the KF-based algorithms produce smoother estimates. Among them, the proposed MCKF achieves a lower RMSE and a narrower standard deviation range than both the standard KF and the AKF. Overall, the MCKF demonstrates superior estimation performance in handling heavy-tailed UWB noise.

\textit{2) Validation using real-world UWB dataset:} To further evaluate the performance of the proposed algorithm with real sensor data, we conduct comparative experiments using the UTIL dataset \cite{zhaoUTILUltrawidebandTimedifferenceofarrival2024}, a real-world UWB measurement dataset. It should be noted that this dataset involves measurements between one dynamic agent and stationary anchors. Therefore, it reflects only a single pairwise ranging scenario and does not constitute a true multi-agent validation involving multiple moving USVs. Based on this dataset, we assume a scenario of relative position estimation between one stationary USV and one moving USV.

The experimental results are presented in Fig.~\ref{fig:UWBnoise}. It can be observed that the RLS-based algorithm exhibits larger fluctuations and higher estimation errors, whereas the methods based on the Kalman filter framework produce noticeably smoother estimates. Among them, the proposed MCKF achieves faster convergence and yields smaller estimation errors. Overall, the MCKF maintains superior performance when applied to real-world data. These results further demonstrate the practical advantages and application potential of the MCKF in real UWB sensor scenarios.

\subsection{Effectiveness of the Overall Framework}

In this subsection, a comprehensive evaluation of the proposed framework's effectiveness under various operational conditions is conducted. The investigation begins by comparing the formation tracking performance when different relative position estimators are employed. Subsequently, the influence of the two considered noise models on positioning and tracking accuracy is analyzed. The scalability of the approach is then verified by configuring the formation with different radii and fleet sizes. To further assess robustness in challenging environments, scenarios involving a weakly maneuvering target and static obstacles are simulated. Finally, the framework's resilience is tested under realistic communication constraints, including message dropouts and the dynamic addition or removal of USVs during mission execution.

\textit{1) Performance under different relative position estimators:} Consider a scenario in which four USVs track a target moving at a constant velocity of $\boldsymbol{v}_{o} = [0.5, 0]^{T}$m/s. The four USVs are initially located at $(-6, -7)$m, $(6, 6)$m, $(-10, 3)$m, and $(-5, 6)$m, respectively, with the target initially positioned at $(1, 1)$m. The initial matrices are set to $\boldsymbol{P}_{ij,0}^+ = 0.04\boldsymbol{I}$ and $\boldsymbol{P}_{io,0}^+ = \boldsymbol{I}$. The desired circumnavigation radius and yaw rate are set to $r_{io}^*=5$m and $\omega_i^*=\pi/30$rad/s, respectively. The range measurment noise is modeled as a Gaussian mixture distribution ${\mu}_{r_{ij},k} \sim 0.8\mathcal{N}(0, 0.4) + 0.2\mathcal{N}(0,10)$.

\begin{figure}[htbp]
    \centering
    \includegraphics[width=0.45\textwidth]{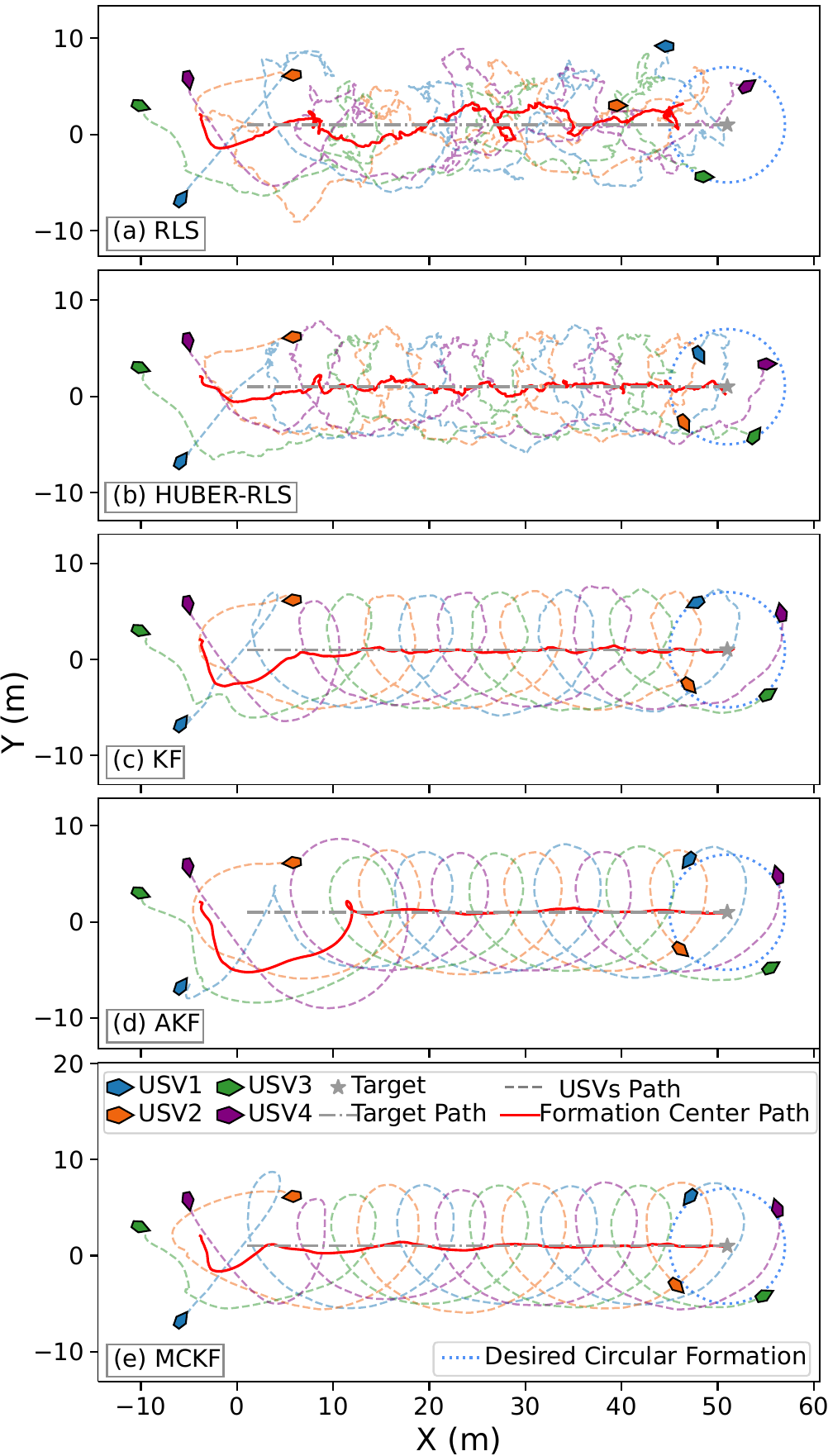}
    \caption{USV target tracking trajectories under different inter-USV relative positioning algorithms.}
    \label{fig:noise1usv4}
\end{figure}

The tracking trajectories obtained using different inter-USV relative positioning algorithms are shown in Fig.~\ref{fig:noise1usv4}. The trajectory of the RLS-based algorithm exhibits significant fluctuations, whereas those of the KF-based algorithms are notably smoother. This observation is consistent with the tracking error curves presented in Fig.~\ref{fig:noise1usv4error}.
It should be noted that $e_{c}$ measures the deviation of the formation center from the target position. Hence, $e_{c}$ can be small even with significant individual deviations, provided these deviations are symmetrically distributed around the formation center, thereby preserving the location of the center itself. As shown in Table~\ref{tab:algorithm_performance}, although the MCKF achieves a smaller estimation error compared to the AKF, its $e_{c}$ is slightly larger. Nevertheless, this difference is marginal and remains within acceptable bounds for practical mission requirements. 
Moreover, as can be observed from the trajectories in Fig.~\ref{fig:noise1usv4} and the quantitative results in Table~\ref{tab:algorithm_performance}, the tracking error of the MCKF-based framework converges more rapidly than that of the AKF-based approach. Furthermore, owing to the iterative computation required for the MCC criterion to converge, the MCKF incurs a longer runtime per iteration. Nonetheless, this runtime remains substantially shorter than the sampling interval, rendering the difference negligible in practice. 
A comprehensive evaluation indicates that the MCKF offers superior estimation accuracy and faster tracking convergence when applied to real sensor data characterized by non-Gaussian noise. Although its average runtime is slightly higher, it remains within an acceptable range. 
Additionally, it is observed that the RLS-based method yields a relatively large $e_{io}^p$ error (see Table~\ref{tab:algorithm_performance} and Fig.~\ref{fig:noise1usv4error}). This may be attributed to excessive errors in inter-USV relative position estimation, which can lead to erratic control inputs and consequently disordered USV trajectories. Such degradation in formation reduces the observability of the PLKF algorithm, thereby resulting in larger $e_{io}^p$ error. This highlights the critical impact of inter-USV relative localization accuracy on overall system performance.

\begin{figure}[htbp]
    \centering
    \includegraphics[width=0.45\textwidth]{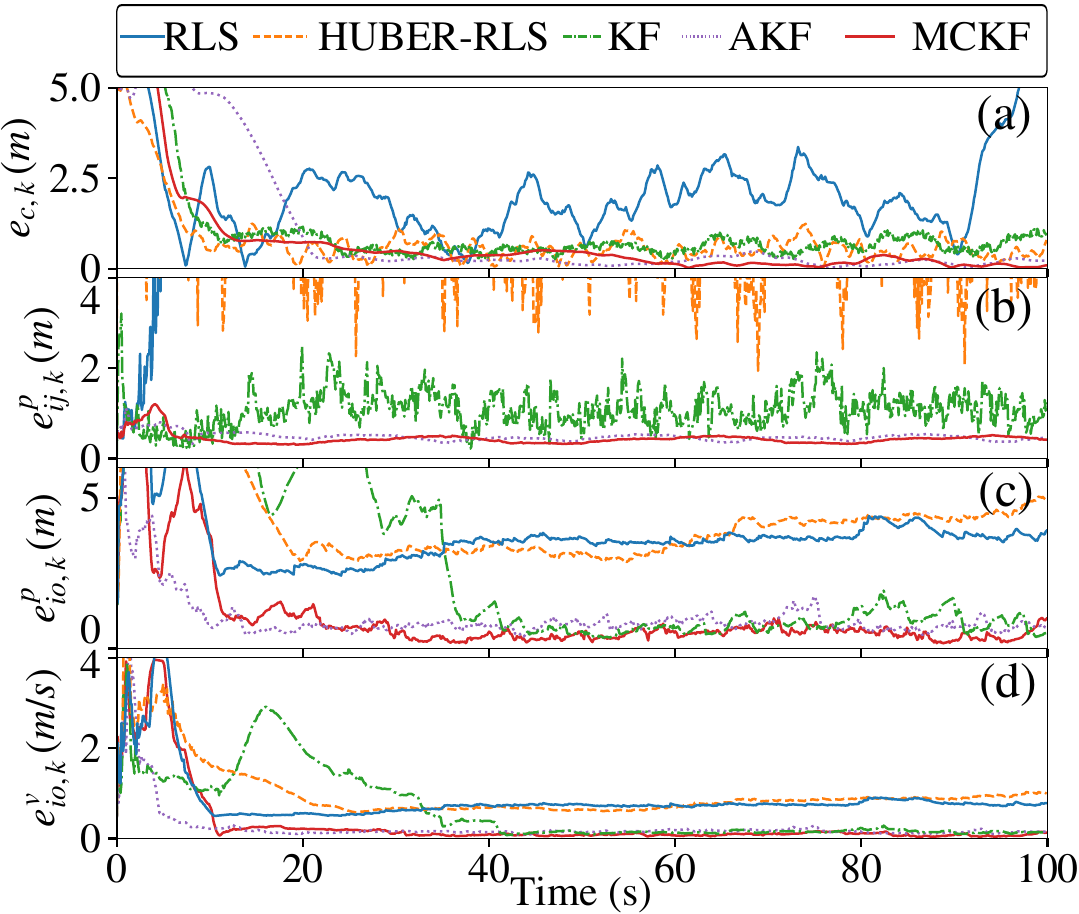}
    \caption{Error evolution for the constant-velocity target tracking scenario under different inter-USV relative positioning algorithms.}
    \label{fig:noise1usv4error}
\end{figure}

\begin{table}[htbp]
\centering
\arrayrulecolor{black} 
\caption{\textsc{Evaluation of Overall Framework Performance with Different Inter-USV Relative Position Estimators}}
\label{tab:algorithm_performance}
\renewcommand{\arraystretch}{1.3}
\setlength{\tabcolsep}{3pt} 
\resizebox{\columnwidth}{!}{ 
\begin{tabular}{lccccc}
\toprule
\textbf{\shortstack{ Estimators}} & \textbf{\shortstack{Mean\\ $e_{ij}^p$ (m)}} & \textbf{\shortstack{Mean\\ $e_{io}^p$ (m)}} & \textbf{\shortstack{Mean\\ $e_{c}$ (m)}} & \textbf{\shortstack{Settling\\ Time (s)}} & \textbf{\shortstack{Avg. Run\\ Time (ms)}} \\ \midrule
RLS       & 33.941 & 4.461 & 3.026 & 12.70 & 0.84 \\
Huber-RLS & 5.685  & 3.928 & 0.499 & 8.80  & 1.02 \\
KF        & 1.160  & 1.101 & 0.485 & 9.60  & 0.95 \\
AKF       & 0.467  & 0.717 & 0.179 & 19.70 & 1.01 \\
MCKF      & 0.449  & 0.476 & 0.196 & 12.30 & 1.50 \\ 
\bottomrule
\end{tabular}
}
\end{table}

\textit{2) Performance under different range measurement noise models:} To further investigate the impact of the MCKF on overall system performance under different range measurement noise characteristics, simulation tests are conducted using both the Gaussian mixture model \eqref{eq:r_noise_model_multi_Gaus} and the log-normal distribution model \eqref{eq:r_noise_model_logGaus}. For the Gaussian mixture model \eqref{eq:r_noise_model_multi_Gaus}, the parameters are set as $\epsilon=0.1$ and $\boldsymbol{R}_{{\mu}_{r_{ij}}}^{outliers}=10$, while $\boldsymbol{R}_{{\mu}_{r_{ij}}}$ is varied among $0.2$, $0.5$, and $0.7$; For the log-normal distribution model \eqref{eq:r_noise_model_logGaus}, we set $r_s=\boldsymbol{R}_{{\mu}_{r}}$, and select $\boldsymbol{R}_{{\mu}_{r}}=0.2,0.5,0.7$. The corresponding simulation results are summarized in Table~\ref{tab:noise_performance}.

As shown in Table~\ref{tab:noise_performance}, the overall system performance remains relatively stable across different noise models, indicating that the MCKF can effectively accommodate variations in the range measurement noise ${\mu}_{r_{ij}}$. The underlying reason is that the MCKF operates on equivalent measurement values derived from range measurements. From \eqref{eq:noise_approximation}, it follows that the equivalent measurement noise $\mu_{ij}$ exhibits consistent statistical characteristics across different noise models for ${\mu}_{r_{ij}}$, specifically a symmetric heavy-tailed distribution with a sharp peak near zero. Furthermore, as the noise intensity increases, the estimation errors gradually increase, while the settling time of the tracking error remains generally stable within a certain range.

\begin{table}[htbp]
\centering
\arrayrulecolor{black} 
\caption{\textsc{Evaluation of Overall Framework Performance with Different Range Measurement Noise Models}}
\label{tab:noise_performance}
\renewcommand{\arraystretch}{1.3}
\setlength{\tabcolsep}{3pt} 
\resizebox{\columnwidth}{!}{ 
\begin{tabular}{lccccc}
\toprule
\textbf{\shortstack{Noise ${\mu}_{r_{ij}}$ }} & \textbf{\shortstack{Mean\\ $e_{ij}^p$ (m)}} & \textbf{\shortstack{Mean\\ $e_{io}^p$ (m)}} & \textbf{\shortstack{Mean\\ $e_{c}$ (m)}} & \textbf{\shortstack{Settling\\ Time (s)}} & \textbf{\shortstack{Avg. Run\\ Time (ms)}} \\ \midrule
\shortstack[l]{$\boldsymbol{R}_{{\mu}_{r_{ij}}}=0.2$} & 0.692 & 0.629 & 0.089 & 11.66 & 1.84 \\
\shortstack[l]{$\boldsymbol{R}_{{\mu}_{r_{ij}}}=0.5$} & 0.696 & 0.658 & 0.115 & 10.98 & 1.88 \\
\shortstack[l]{$\boldsymbol{R}_{{\mu}_{r_{ij}}}=0.7$} & 0.699 & 0.625 & 0.141 & 11.40 & 1.76 \\
\shortstack[l]{$\boldsymbol{R}_{{\mu}_{r}}=0.2$}  & 0.688 & 0.619 & 0.116 & 10.84 & 1.69 \\
\shortstack[l]{$\boldsymbol{R}_{{\mu}_{r}}=0.5$}  & 0.689 & 0.624 & 0.098 & 12.64 & 1.67 \\
\shortstack[l]{$\boldsymbol{R}_{{\mu}_{r}}=0.7$}  & 0.697 & 0.638 & 0.109 & 10.48 & 1.68 \\ 
\bottomrule
\end{tabular}
}
\end{table}

\textit{3) Performance under different numbers of USVs and desired radii:} 
This experiment evaluates the scalability of the proposed framework by varying both the number of USVs and the desired circumnavigation radius, and examines their impact on overall system performance. The range measurement noise is fixed as ${\mu}_{r_{ij},k} \sim 0.8\mathcal{N}(0, 0.4) + 0.2\mathcal{N}(0,10)$, and the desired yaw rate is set to $\omega_i^*=\pi/30$rad/s. To ensure that the USVs can successfully accomplish the circumnavigation task under a larger radius, the physical velocity bound of each USV is increased to $U_{\boldsymbol{v}_{i}}=10$, and the upper bound of the formation control term constraint is raised to $U_{\boldsymbol{v}_{i}^f}=3$.

As shown in Table~\ref{tab:agent_radius_performance}, the tracking task is successfully accomplished under all tested configurations of radii and USV fleet sizes. The algorithm runtime increases with the number of USVs, as expected. Under the same radius, variations in the number of USVs do not lead to significant changes in either the relative position estimation error or the tracking error.
However, owing to the increased values of $U_{\boldsymbol{v}_{i}}$ and $U_{\boldsymbol{v}_{i}^f}$, more pronounced motion variations occur during the convergence phase when the radius is relatively small. Consequently, the settling time becomes longer in such cases. Nevertheless, these more aggressive motion variations simultaneously enhance the PE level of the relative motion among USVs. As a result, the inter-USV relative position estimation error $e_{ij}^p$  is smaller compared to the results obtained with the MCKF in Table~\ref{tab:algorithm_performance}. 
Furthermore, for a fixed number of USVs, an increase in the desired radius leads to a larger $U_{\boldsymbol{v}_{io}^*}$, since $\omega_i^*$ remains constant. This enhances the PE level of the relative motion between each USV and the target. Consequently, as shown in Table~\ref{tab:agent_radius_performance}, the target state estimation error $e_{io}^p$ decreases with increasing radius. These observations are consistent with the theoretical analysis presented in Section V regarding the influence of upper bound settings and the values of $\omega_i^*$ and $r_{io}^*$ on estimation performance.

\begin{table}[htbp]
\centering
\arrayrulecolor{black} 
\caption{\textsc{Evaluation of Overall Framework Performance with Different USV Fleet Sizes and Circumnavigation Radii}}
\label{tab:agent_radius_performance}
\renewcommand{\arraystretch}{1.3}
\setlength{\tabcolsep}{3pt} 
\resizebox{\columnwidth}{!}{ 
\begin{tabular}{lcccccc}
\toprule
\textbf{\shortstack{USV \\ Num.}} & \textbf{\shortstack{Radius\\$r_{io}^*$ (m)}} & \textbf{\shortstack{Mean\\ $e_{ij}^p$ (m)}} & \textbf{\shortstack{Mean\\ $e_{io}^p$ (m)}} & \textbf{\shortstack{Mean\\ $e_{c}$ (m)}} & \textbf{\shortstack{Settling\\ Time (s)}} & \textbf{\shortstack{Avg. Run\\ Time (ms)}} \\ \midrule
n=4  & 5  & 0.245 & 2.855 & 0.312 & 11.43 & 1.69  \\
n=4  & 10 & 0.369 & 0.989 & 0.092 & 11.60 & 1.68  \\
n=4  & 20 & 0.694 & 0.469 & 0.084 & 12.79 & 1.66  \\
n=8  & 5  & 0.226 & 2.658 & 0.188 & 3.57  & 3.24 \\
n=8  & 10 & 0.336 & 0.943 & 0.054 & 5.48  & 3.26 \\
n=8  & 20 & 0.621 & 0.399 & 0.059 & 6.60  & 3.28 \\
n=12 & 5  & 0.224 & 2.634 & 0.147 & 3.31  & 5.10 \\
n=12 & 10 & 0.328 & 0.900 & 0.056 & 5.70  & 4.81 \\
n=12 & 20 & 0.602 & 0.368 & 0.051 & 7.19  & 5.22 \\ 
\bottomrule
\end{tabular}
}
\end{table}

\begin{figure}[htbp]
    \centering
    \includegraphics[width=0.47\textwidth]{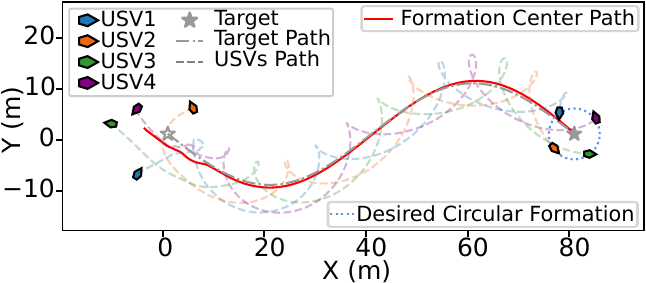}
    \caption{Multi-USV circumnavigation a varying-velocity target in an obstacle-free environment.}
    \label{fig:ARWITHOUTCOBS}
\end{figure}

\begin{figure}[htbp]
    \centering
    \includegraphics[width=0.47\textwidth]{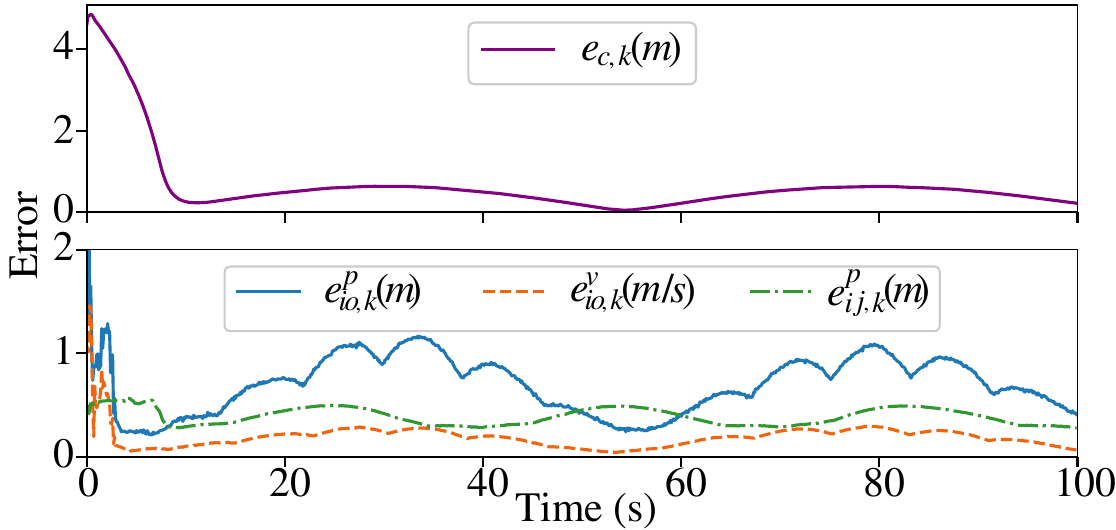}
    \caption{Error evolution for the varying-velocity target tracking scenario.}
    \label{fig:arcobsta}
\end{figure}

\textit{4) Perfomance for tracking maneuvering targets:}
We further evaluate the framework's capability in tracking a maneuvering target. In this scenario, the target moves at a constant speed of $0.5$m/s along a sinusoidal trajectory, which introduces continuous variations in both heading and lateral acceleration. Such motion dynamics pose a significant challenge for maintaining a stable formation center, as the target's varying acceleration demands timely adjustments from the tracking controllers. Fig.~\ref{fig:ARWITHOUTCOBS} illustrates the actual trajectories of the USVs. It can be observed that each USV maintains an approximately circular formation relative to the target even during acceleration and turning maneuvers, thereby confirming the framework's effectiveness in tracking maneuvering targets. The error curves presented in Fig.~\ref{fig:arcobsta} reveal that, although slight fluctuations occur during the target's acceleration phases, all evaluation metrics converge to stable values within a finite time. Specifically, the tracking error of the formation center, as well as the estimation errors for the target's position and velocity, remain within narrow and bounded ranges throughout the entire mission, despite the continuously varying motion profile of the target.

\textit{5) Performance in obstacle environment with occlusions:}
We further evaluate the framework's performance in a complex environment containing two island-shaped obstacles. The simulation results are presented in Fig.~\ref{fig:curveTrajandsta} (a), where the red dashed lines indicate periods of line-of-sight occlusion, i.e., temporary unavailability of bearing measurements between USVs and the target. During such intervals, the PLKF reverts to a prediction-only step as defined in \eqref{eq:predic_only} to prevent the incorporation of unreliable visual observations.
It can be observed that, despite the non-line-of-sight conditions, each USV maintains the formation structure by relying on relative positioning information exchanged with neighbors. Upon restoration of visual measurements, the USVs successfully reacquire the target and resume continuous encircling. As shown in Fig.~\ref{fig:curveTrajandsta} (b), the peak in $e_{c,k}$ at approximately $t = 20$s corresponds to a deliberate deviation, during which the USVs prioritize maintaining the safety distance $d_s$ from obstacle 1. Due to intermittent measurement losses caused by occlusion and the evasive maneuvers required for obstacle avoidance, the error curves exhibit slight fluctuations. Nevertheless, the formation maintains effective encirclement of the target throughout the mission, thereby verifying the system's robustness and safety-critical performance under complex environmental conditions.

\begin{figure}[htbp]
    \centering
    \includegraphics[width=0.47\textwidth]{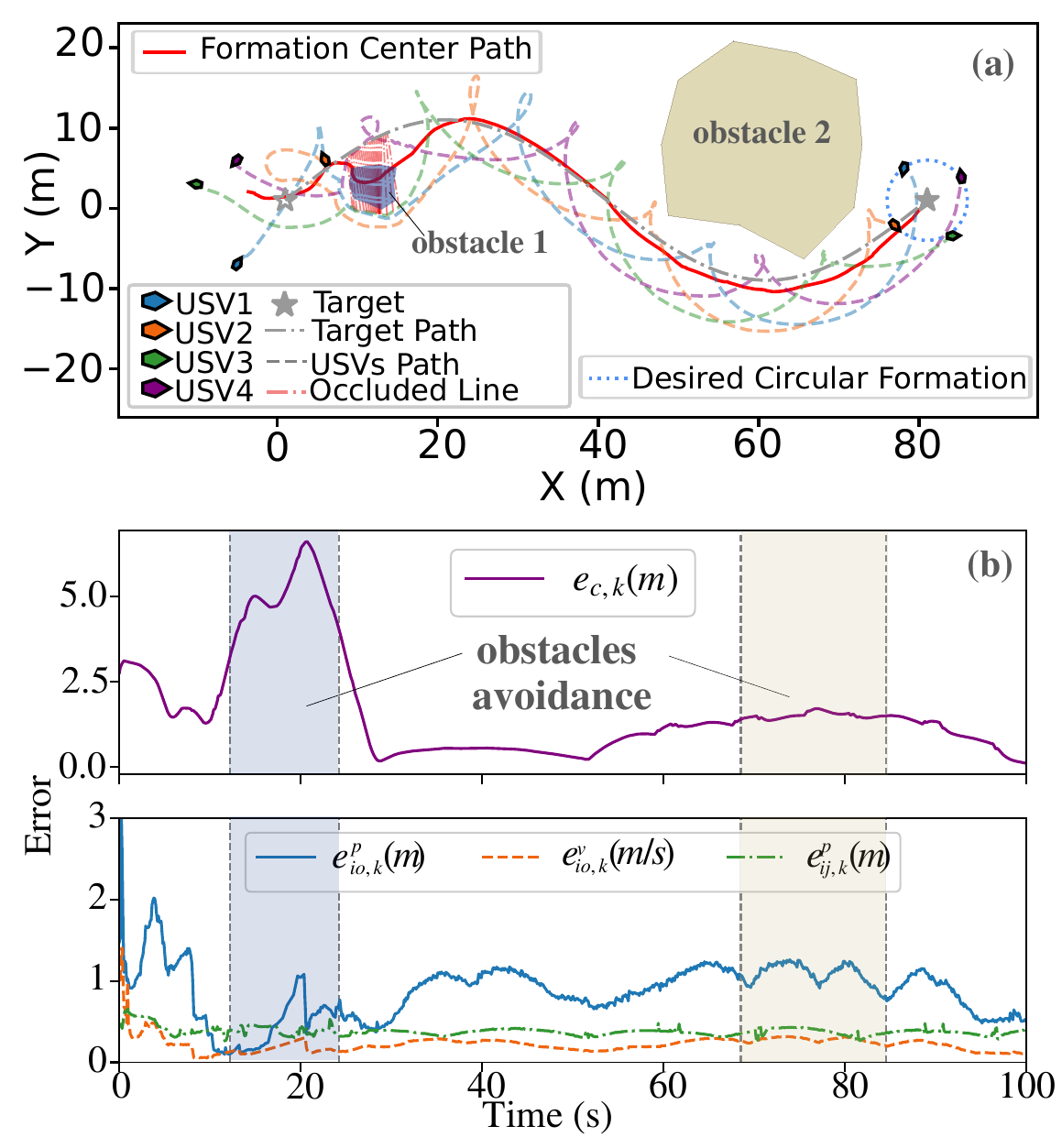}
    \caption{(a) Multi-USV circumnavigation a varying-velocity target with obstacle occlusion. Red dotted lines indicate line-of-sight occlusions between USVs and the target.(b)Error evolution for the obstacles occlusion scenario.}
    \label{fig:curveTrajandsta}
\end{figure}

\begin{figure}[htbp]
    \centering
    \includegraphics[width=0.47\textwidth]{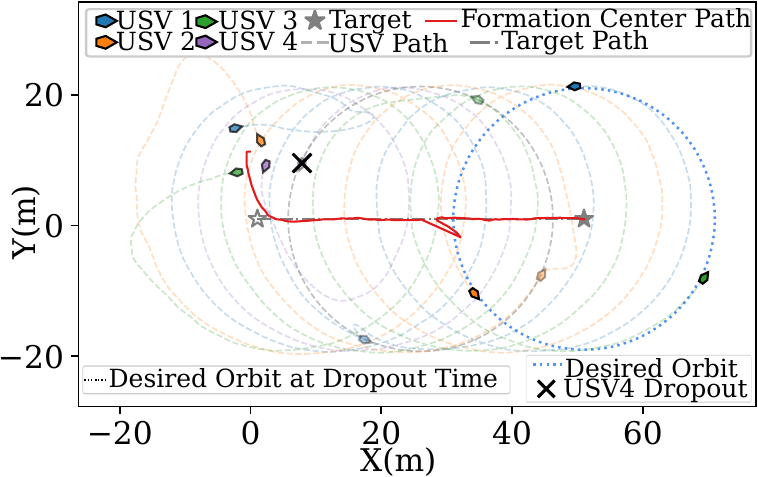}
    \caption{Trajectories of the USVs and the target in the USV dropout scenario.}
    \label{fig:DroppedTraj}
\end{figure}

\begin{figure}[htbp]
    \centering
    \includegraphics[width=0.47\textwidth]{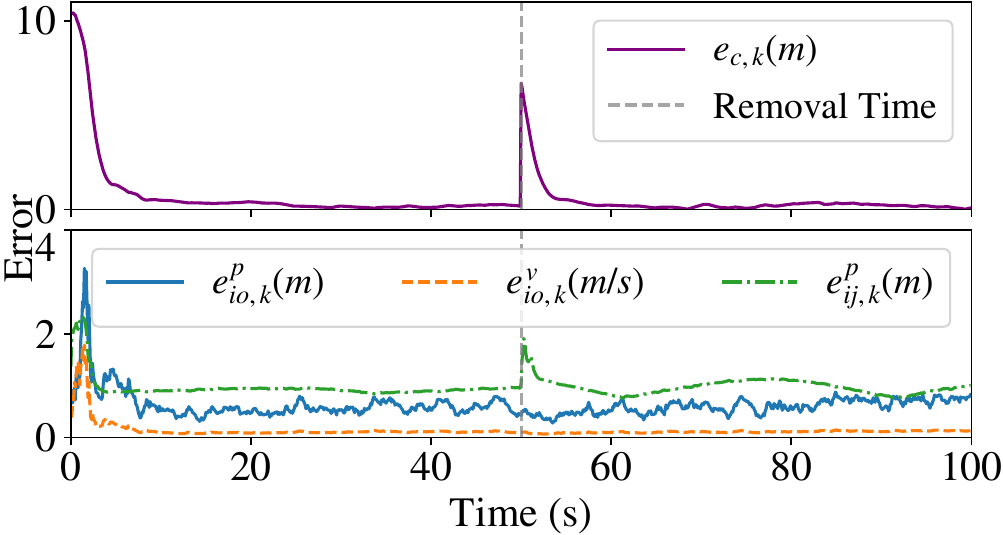}
    \caption{Error evolution for the USV dropout scenario.}
    \label{fig:DroppedSta}
\end{figure}

\textit{6) Performance under USV joining and dropout with communication interruptions:}
To validate the role of the coupled oscillator model within the overall framework, simulations involving USV joining and dropout are conducted under non-ideal communication conditions. Both sets of tests are performed with a 5\% communication packet loss rate. In each scenario, the desired formation radius is set to $20$m. Initially, four USVs are positioned at $(-3, 14)$m, $(2, 12)$m, $(-1, 8)$m, and $(3, 10)$m, respectively, with the target located at $(1, 1)$m.

\begin{figure}[htbp]
    \centering
    \includegraphics[width=0.47\textwidth]{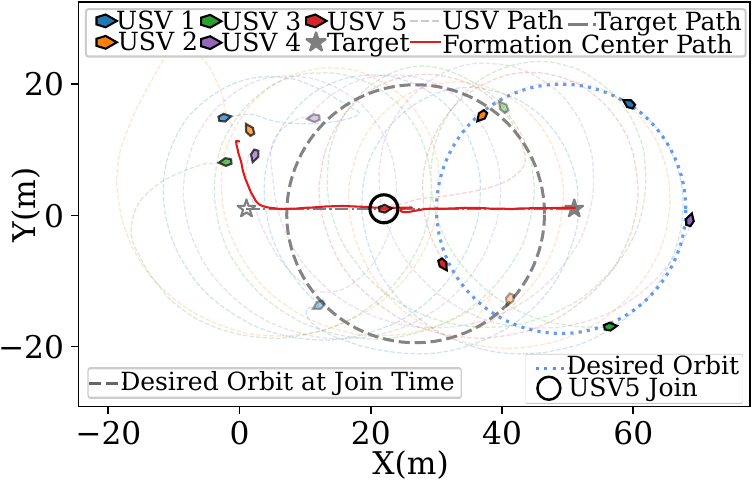}
    \caption{Trajectories of the USVs and the target in the USV joining scenario.}
    \label{fig:ADDTraj}
\end{figure}

\begin{figure}[htbp]
    \centering
    \includegraphics[width=0.47\textwidth]{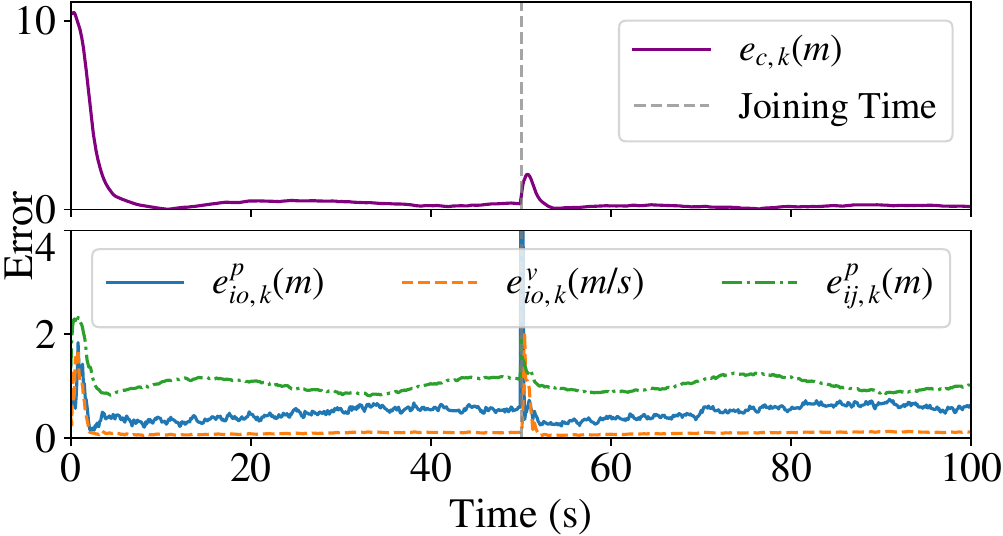}
    \caption{Error evolution for the USV joining scenario.}
    \label{fig:Addsta}
\end{figure}

Fig.~\ref{fig:DroppedTraj} illustrates the trajectories in the USV dropout scenario. Specifically, USV 4 is assumed to disconnect from the network at $t=50$s within a 100s simulation. Following the dropout, both the tracking error and estimation error of the system increase transiently, but quickly recover and converge within a short period (see Fig.~\ref{fig:DroppedSta}). As can be observed from the trajectories in Fig.~\ref{fig:DroppedTraj}, the remaining USVs autonomously readjust to a uniformly spaced configuration, thereby maintaining effective encirclement of the target.
Fig.~\ref{fig:ADDTraj} shows the trajectories in the USV join scenario. Here, USV 5 is introduced into the fleet at $t=50$s from an initial position of $(21, 1)$m. Similarly, upon the change in fleet size, the system error exhibits a transient increase before converging rapidly (see Fig.~\ref{fig:Addsta}). The proposed coupled oscillator model demonstrates remarkable resilience across both test cases, indicating that the system maintains stability without divergence despite sudden topological changes. Furthermore, the results show that the estimation error remains well within operational limits, even as the geometric constraints become more challenging due to the increased number of USVs.

\section{Conclusion}
This paper has presented a comprehensive framework for cooperative target circumnavigation using multiple USVs only with onboard sensing. A heterogeneous perception strategy was developed, integrating active inter-USV ranging and passive USV-target bearing measurements. To address these distinct estimation tasks, a Maximum Correntropy Kalman Filter was employed for inter-USV localization, while a Pseudo-Linear Kalman Filter was utilized for USV-target positioning. Furthermore, a coupled-oscillator-based controller is proposed to simultaneously guarantee target circumnavigation and ensure persistent excitation of the relative motions, thereby maintaining observability of the estimation system. The effectiveness and robustness of the proposed approach were validated through comprehensive simulations under various scenarios, including different target motion models and obstacle conditions. However, the USV dynamics model in this work is deliberately simplified and does not account for hydrodynamic effects. Consequently, to address these limitations, future research will investigate the integration of learning-based approaches to improve robustness against external disturbances and model uncertainties arising from hydrodynamics.

\section*{APPENDIX}
\subsection{Proof of Lemma~\ref{lemma:pe_vij}}
    From the combined analysis of controller \eqref{eq:controller} and vehicle dynamics \eqref{eq:dynamic_USV} \eqref{eq:USV_wel}, we obtain $\boldsymbol{v}_{ij,k} = \boldsymbol{\pi}_{ij,k} + \boldsymbol{v}_{ij,k}^*$, where 
    \begin{align*}
    \boldsymbol{\pi}_{ij,k} =& [ \text{sat}(U_{\boldsymbol{v}_{i}^f},\boldsymbol{v}_{i,k}^f) + \text{sat}(U_{\boldsymbol{v}_{o}},\hat{\boldsymbol{v}}_{o,k}^i) ] \\
    &- [ \text{sat}(U_{\boldsymbol{v}_{j}^f},\boldsymbol{v}_{j,k}^f) + \text{sat}(U_{\boldsymbol{v}_{o}},\hat{\boldsymbol{v}}_{o,k}^j) ] 
    \end{align*}
    Thus, $\|\boldsymbol{\pi}_{ij,k}\| \leq U_{\boldsymbol{v}_{i}^f} + U_{\boldsymbol{v}_{j}^f} + 2U_{\boldsymbol{v}_{o}}$. 
    
    To prove \eqref{eq:pe_vij_}, it suffices to show that for any unit vector $\boldsymbol{y} \in \mathbb{R}^2$, $ \lambda_{ij,1} \leq \boldsymbol{y}^T \boldsymbol S_{ij,l} \boldsymbol{y} \leq \lambda_{ij,2} $. Define
    \begin{equation*}
    \left\{\begin{array}{l}
\boldsymbol{V}_{ij,l}=\left[\boldsymbol{v}_{ij,l}^*,...,\boldsymbol{v}_{ij,l+N-1}^* \right] \\\boldsymbol{\Pi}_{ij,l}=\left[\boldsymbol{\pi}_{ij,l},...,\boldsymbol{\pi}_{ij,l+N-1} \right] 
    \end{array}
    \right.
    \end{equation*}
    Thus,  one has 
		\begin{equation}\label{eq:innerproduc}
			\boldsymbol{y}^T \boldsymbol S_{ij,l} \boldsymbol{y} = \left\lVert \boldsymbol{V}_{ij,l}^{T} \boldsymbol{y}  \right\rVert^{2} + 2\left\langle \boldsymbol{V}_{ij,l}^{T}\boldsymbol{y}, \boldsymbol{\Pi}_{ij,l}^{T}\boldsymbol{y} \right\rangle + \left\lVert \boldsymbol{\Pi}_{ij,l}^{T} \boldsymbol{y}\right\rVert^{2}
		\end{equation}	
    where $\langle \cdot, \cdot \rangle$ denotes the inner product. Note that 
    \begin{equation*}
        \lambda_{ij,1}^* \leq \lVert \boldsymbol{V}_{ij,l}^{T} \boldsymbol{y}  \rVert \leq \lambda_{ij,2}^*
    \end{equation*}
    and  
    \begin{equation*}
    \lVert \boldsymbol{\Pi}_{ij,l}^{T} \boldsymbol{y}  \rVert \leq \sqrt{N} \left(U_{\boldsymbol{v}_{i}^f} + U_{\boldsymbol{v}_{j}^f} + 2U_{\boldsymbol{v}_{o}}\right)
    \end{equation*} 
    
    By applying the Cauchy–Schwartz inequality,
    \begin{equation*}
    \begin{aligned}
         -\left\lVert \boldsymbol{V}_{ij,l}^{T} \boldsymbol{y}  \right\rVert \left\lVert \boldsymbol{\Pi}_{ij,l}^{T} \boldsymbol{y}  \right\rVert  &\leq \left\langle \boldsymbol{V}_{ij,l}^{T}\boldsymbol{y}, \boldsymbol{\Pi}_{ij,l}^{T}\boldsymbol{y} \right\rangle \\
         &\leq \left\lVert \boldsymbol{V}_{ij,l}^{T} \boldsymbol{y}  \right\rVert \left\lVert \boldsymbol{\Pi}_{ij,l}^{T} \boldsymbol{y}  \right\rVert 
    \end{aligned}
    \end{equation*}
It follows from the above inequalities that $\lambda_{ij,1} \leq \boldsymbol{y}^T \boldsymbol S_{ij,l} \boldsymbol{y} \leq \lambda_{ij,2}$. Thus, we have completed the proof of the conclusion given in \eqref{eq:pe_vij_} of Lemma \ref{lemma:pe_vij} .

\subsection{Proof of Lemma~\ref{lemma:pe_vio}}
    By analogy with the proof of Lemma \ref{lemma:pe_vij}, we obtain $\boldsymbol{v}_{io,k} = \boldsymbol{\pi}_{io,k} + \boldsymbol{v}_{io,k}^*$, where 
    \begin{equation*}
        \boldsymbol{\pi}_{io,k} = [ \text{sat}(U_{\boldsymbol{v}_{i}^f},\boldsymbol{v}_{i,k}^f) + \text{sat}(U_{\boldsymbol{v}_{o}},\hat{\boldsymbol{v}}_{o,k}^i) ] - \boldsymbol{v}_{o,k}
    \end{equation*}
    Consequently, $\|\boldsymbol{\pi}_{io,k}\| \leq U_{\boldsymbol{v}_{i}^f} + 2U_{\boldsymbol{v}_{o}}$. Then, following the same steps as in the proof of Lemma \ref{lemma:pe_vij}, we can readily establish the result stated in \eqref{eq:pe_vi_}.

\subsection{Proof of Theorem~\ref{theorem:MCKF_ob}} 
    By combining equations \eqref{eq:dynamic_USV}, \eqref{eq:USV_wel} and \eqref{eq:self_displacement} yields $$\boldsymbol{\delta}_{ij,k} = \boldsymbol{v}_{i,k} \Delta t - \boldsymbol{v}_{j,k} \Delta t  = \boldsymbol{v}_{ij,k} \Delta t$$
    Since $\boldsymbol{v}_{ij,k}$ is persistently exciting (Lemma \ref{lemma:pe_vij}), it follows from \eqref{eq:pe_vij_} that 
    \begin{equation*}
        \lambda_{ij,1}\Delta t \boldsymbol{I} \leqslant \boldsymbol \Phi_{ij,l}\triangleq \sum_{k=l}^{l+N-1}{\boldsymbol{\delta}_{ij,k} \boldsymbol{\delta}_{ij,k}^{T} } \leqslant \lambda_{ij,2} \Delta t \boldsymbol{I} 
    \end{equation*}
    which completes the proof.

\subsection{Proof of Theorem~\ref{theorem:PLKF_ob}} 
    We first show that the bearing vector $\boldsymbol{b}_{io,k}$ realtive to $\mathcal{V}_i$ and the target is persistently exciting. From the motion models in \eqref{eq:dynamic_USV} and \eqref{eq:dynamic_target}, we have $\boldsymbol{p}_{io,k+1} = \boldsymbol{p}_{io,k} + \Delta t \boldsymbol{v}_{io,k}$. Given that $\boldsymbol{v}_{io,k}$ is persistently exciting (Lemma \ref{lemma:pe_vio}), and $\|\boldsymbol{p}_{io,k}\| \leq U_{\boldsymbol{p}_{io}}$ (Assumption \ref{assump:maneuverability}), we can follow the same reasoning as in the proof of Lemma \ref{lemma:pe_vij} to obtain
     \begin{equation*}
     \begin{aligned}
    (\lambda_{io,1} -\Delta t \sqrt{N} U_{\boldsymbol{p}_{io}})^2 \boldsymbol{I} & \leq  \sum_{k=l}^{l+N-1}  {\boldsymbol{p}_{io,k} \boldsymbol{p}_{io,k}^{T} } \\ & \leq(\lambda_{io,2} +  \Delta t \sqrt{N} U_{\boldsymbol{p}_{io}})^2 \boldsymbol{I}
    \end{aligned}
    \end{equation*}
    By recalling the definition of $\boldsymbol{b}_{io,k}$ in \eqref{eq:target_bearing}, it can show that the $\boldsymbol{b}_{io,k}$ also satisfied the PE condition for $l \geq k_s$
     \begin{equation*}
         \sigma_{ij,1} \boldsymbol{I} \leqslant  \sum_{k=l}^{l+N-1}{\boldsymbol{b}_{io,k} \boldsymbol{b}_{io,k}^{T} }\leqslant \sigma_{ij,2} \boldsymbol{I}
     \end{equation*}
    where $ \sigma_{ij,1} = (\lambda_{io,1} - \Delta t \sqrt{N} U_{\boldsymbol{p}_{io}})^2 / U_{\boldsymbol{p}_{io}}^2 $ and $\sigma_{ij,2} = (\lambda_{io,2} + \Delta t \sqrt{N} U_{\boldsymbol{p}_{io}})^2/d_{s}^2$.

    We now compute $\boldsymbol \Phi_{io,l}$ to examine its structure. Using the fact that $ \boldsymbol{G}_{{\boldsymbol{b}}_{io},k}^T \boldsymbol{G}_{{\boldsymbol{b}}_{io},k} =\boldsymbol{G}_{{\boldsymbol{b}}_{io},k}$, we obtain \begin{equation*}
        \boldsymbol \Phi_{io,l} = \sum_{k=l}^{l+N-1} \left(\boldsymbol{I} - \boldsymbol{b}_{io,k}\boldsymbol{b}_{io,k}^T \right)
    \end{equation*}
    Let $\boldsymbol{q}_{io,k}$ denote the unit vector orthogonal to $\boldsymbol{b}_{io,k}$, so that $\boldsymbol{G}_{{\boldsymbol{b}}_{io},k} = \boldsymbol{q}_{io,k}\boldsymbol{q}_{io,k}^T$. Thus,
    $$\boldsymbol \Phi_{io,l} = \sum_{k=l}^{l+N-1} \boldsymbol{q}_{io,k} \boldsymbol{q}_{io,k} ^T $$  
    Note that $\boldsymbol{q}_{io,k}$ is also persistently exciting, satisfying 
    \begin{equation*}
        \sigma_{ij,1} \boldsymbol{I} \leqslant  \sum_{k=l}^{l+N-1}{\boldsymbol{q}_{io,k} \boldsymbol{q}_{io,k}^{T} }\leqslant \sigma_{ij,2} \boldsymbol{I}
    \end{equation*}
    Geometrically, this corresponds to a 90-degree rotation in the plane, which preserves the excitation properties of $\boldsymbol{b}_{io,k}$. It follows that the Gramian matrix $\boldsymbol{\Phi}_{io,l}$ is positive definite, implying observability of the relative positions $\boldsymbol{p}_{io,k}$. This completes the proof.

\bibliographystyle{IEEEtran} 

\bibliography{2025USV} %

\vfill

\end{document}